\def\1{\bm{1}}
\DeclareMathAlphabet{\mathsfit}{\encodingdefault}{\sfdefault}{m}{sl}
\SetMathAlphabet{\mathsfit}{bold}{\encodingdefault}{\sfdefault}{bx}{n}
\definecolor{Gray}{gray}{0.9}
\newcommand{\splitcell}[1]{\begin{tabular}{@{}c@{}}#1\end{tabular}}
\newcommand{\bsplitcell}[1]{$\left[\splitcell{#1}\right]$}
\newcommand*\circled[1]{\tikz[baseline=(char.base)]{
            \node[shape=circle,draw,inner sep=0.5pt] (char) {#1};}}
\DeclareFontFamily{U}{tipa}{}
\DeclareFontShape{U}{tipa}{m}{n}{<->tipa10}{}
\newcommand{\arc@char}{{\usefont{U}{tipa}{m}{n}\symbol{62}}}%
\newcommand{\arc}[1]{\mathpalette\arc@arc{#1}}
\newcommand{\arc@arc}[2]{%
  \sbox0{$\m@th#1#2$}%
  \vbox{
    \hbox{\resizebox{\wd0}{\height}{\arc@char}}
    \nointerlineskip
    \box0
  }%
}
\newcommand{\model}{EurNet}
\newtheorem{myprop}{Proposition}
\setlist[itemize]{leftmargin=5.4mm}
\setlist[enumerate]{leftmargin=8.4mm}
\definecolor{revision}{RGB}{255,0,0}            
\title{{\model}: Efficient Multi-Range Relational \\Modeling of Spatial Multi-Relational Data}
\author{
  Minghao Xu\textsuperscript{1,4}, Yuanfan Guo\textsuperscript{3}, Yi Xu\textsuperscript{3}, \\ \textbf{Jian Tang\textsuperscript{1,5,6}, Xinlei Chen\textsuperscript{2}, Yuandong Tian\textsuperscript{2}} \\
  Mila - Qu\'ebec AI Institute\textsuperscript{1}, Meta AI (FAIR)\textsuperscript{2}, Shanghai Jiao Tong University\textsuperscript{3} \\
  Universit\'e de Montr\'eal\textsuperscript{4}, HEC Montr\'eal\textsuperscript{5}, CIFAR AI Chair\textsuperscript{6} \\
  \texttt{minghao.xu@mila.quebec}, 
  \texttt{\{gyfastas,xuyi\}@sjtu.edu.cn} \\
  \texttt{jian.tang@hec.ca},
  \texttt{\{xinleic,yuandong\}@meta.com}
}
\begin{document}

\maketitle

%%%%%%%%%%%%%%%%%%%%%%%%%%%%%%%%%%%%%%%%%%%%%%%%%%%%%%%%%%%%

%%%%%%%%%%%%%%%%%%%%%%%%%%%%%%%%%%%%%%%%%%%%%%%%%%%%%%%%%%%%

\begin{abstract}

Modeling spatial relationship in the data remains critical across many different tasks, such as image classification, semantic segmentation and protein structure understanding. Previous works often use a unified solution like relative positional encoding. However, there exists different kinds of spatial relations, including short-range, medium-range and long-range relations, and modeling them separately can better capture the focus of different tasks on the multi-range relations (\emph{e.g.}, short-range relations can be important in instance segmentation, while long-range relations should be upweighted for semantic segmentation).  
% Such separate treatment is favored over the unified solution like relative positional encoding, since the relative importance of these multi-range relations could vary across different tasks (\emph{e.g.}, short-range relations dominate instance segmentation, while long-range relations should be upweighted for semantic segmentation).
%The separate modeling of spatial relations is rarely studied in previous works. 
% In addition, there exists different kinds of spatial relations, including short-range, medium-range and long-range relations, and each of them deserves a separate treatment to capture diverse types of interactions in the data, \emph{e.g.}, object interactions driven by different relations within an image, various physical and chemical interactions within a protein, \emph{etc.} 
% Such relationships are rarely modeled in previous works. 
In this work, we introduce the \textbf{\model} for \textbf{E}fficient m\textbf{u}lti-range \textbf{r}elational modeling. {\model} constructs the multi-relational graph, where each type of edge corresponds to short-, medium- or long-range spatial interactions. 
% where the graph structure can be dynamically adjusted based on different modeling stages.
In the constructed graph, {\model} adopts a novel modeling layer, called \emph{\textbf{g}ated \textbf{r}elational \textbf{m}essage \textbf{p}assing} (\textbf{GRMP}), to propagate multi-relational information across the data. GRMP captures multiple relations within the data with little extra computational cost. We study {\model}s in two important domains for image and protein structure modeling. Extensive experiments on ImageNet classification, COCO object detection and ADE20K semantic segmentation verify the gains of {\model} over the previous SoTA FocalNet. On the EC and GO protein function prediction benchmarks, {\model} consistently surpasses the previous SoTA GearNet. Our results demonstrate the strength of {\model}s on modeling spatial multi-relational data from various domains. The implementations of {\model} for image modeling are available at \url{https://github.com/hirl-team/EurNet-Image}. The implementations for other applied domains/tasks will be released soon.  

\end{abstract}

%%%%%%%%%%%%%%%%%%%%%%%%%%%%%%%%%%%%%%%%%%%%%%%%%%%%%%%%%%%%
%%%%%%%%%%%%%%%%%%%%%%%%%%%%%%%%%%%%%%%%%%%%%%%%%%%%%%%%%%%%

\section{Introduction} \label{sec:intro}

%%%%% Version 3 %%%%%%

This work studies the data that lie in the 2D/3D space and incorporate interacting relations on different spatial ranges. 
 A representative example is the image data, where an object in the image can interact with other adjacent objects via the direct touch, and it can also interact with those distantly relevant ones via gazing, waving hands or pointing. In protein science, the protein 3D structure is another typical example, in which different amino acids can interact in short range by peptide/hydrogen bonds, and they can also interact in medium and long ranges by hydrophobic interaction. We summarize such kind of data as \textbf{spatial multi-relational data}. 

In various domains, a lot of previous efforts have been made to model the spatial multi-relational data. For image modeling, multi-head self-attention mechanisms~\citep{dosovitskiy2020image,liu2021swin}, convolutional operations with large receptive fields~\citep{ding2022scaling,yang2022focal} and MLPs for mixing full spatial information~\citep{tolstikhin2021mlp,touvron2021resmlp} are explored to capture multi-range spatial interactions within an image. For protein structure modeling, \citet{zhang2022protein} builds multiple groups of edges for different short-range interactions and employs relational graph convolution~\citep{schlichtkrull2018modeling} for multi-relational modeling. These works either implicitly treat different kinds of spatial relations (\emph{i.e.}, short-range, medium-range and long-range relations)~\citep{tolstikhin2021mlp,yang2022focal} or handle them by a unified scheme like relative positional encoding~\citep{dosovitskiy2020image,liu2021swin}. However, considering the relative importance of these spatial relations could vary across different tasks (\emph{e.g.}, the great importance of short-range relations in instance segmentation, and the upgraded importance of long-range relations in semantic segmentation), separately modeling each spatial relation is a better solution to capture different tasks' focus. Such a separate modeling approach remains to be explored, and, especially, the approach is expected to have efficient adaptation to large data and model scales.

To attain this goal, we propose the \textbf{\model} for \textbf{E}fficient m\textbf{u}lti-range \textbf{r}elational modeling. In general, {\model}s are a series of relational graph neural networks equipped with graph construction layers, where relational edges are constructed by the layers for capturing multi-range spatial interactions. When instantiated with different domain knowledge (\emph{e.g.}, computer vision or protein science), {\model}s can be specialized to tackle important problems like image classification, image segmentation and protein function prediction. To be specific, upon the raw data, {\model} first uses the graph construction layers to build different types of edges that respectively capture the short-, medium- and long-range spatial interactions within the data.
% where domain knowledge is employed to define the suitable multi-range edges.
For efficient multi-relational modeling over the constructed graph, we next introduce the \emph{gated relational message passing} (\textbf{GRMP}) layer as the basic modeling module of {\model}. GRMP separately performs (1) relational message aggregation on each individual feature channel and (2) node-wise aggregation of different feature channels. 
Compared to the classical relational graph convolution (RGConv)~\citep{schlichtkrull2018modeling}, GRMP enjoys lower computational cost when more relations are to be modeled, and thus can handle more types of spatial interactions given the same computational budget. {\model} also supports dynamic graph construction and multi-stage modeling that are used in domains like image modeling. 

We demonstrate {\model}s in image and protein structure modeling. To model image patches, we build {\model}s with hierarchical graph construction layers and multiple modeling stages and derive a model series with increasing capacity, \emph{i.e.}, \textbf{{\model}-T}, \textbf{{\model}-S} and \textbf{{\model}-B}. These models enjoy comparable or better top-1 accuracy (82.3\% \emph{v.s.} 82.3\%; 83.6\% \emph{v.s.} 83.5\%; 84.1\% \emph{v.s.} 83.9\%) against the previous SoTA FocalNet$_{\textrm{(LRF)}}$ series~\citep{yang2022focal} on ImageNet-1K classification. 
% Similar performance gains are preserved on COCO object detection and ADE20K semantic segmentation. 
To model protein alpha carbons, we build {\model} with a single-stage model architecture as GearNet~\cite{zhang2022protein}. 
% Under this fix-architecture comparison, {\model} consistently outperforms the SoTA GearNet on standard protein function prediction benchmarks in terms of protein-centric maximum F-score (EC: 0.768 \emph{v.s.} 0.730; GO-BP: 0.437 \emph{v.s.} 0.356; GO-MF: 0.563 \emph{v.s.} 0.503; GO-CC: 0.421 \emph{v.s.} 0.414). 
On standard protein function prediction benchmarks, {\model} consistently outperforms the SoTA GearNet in terms of $\mathbf{F}_\mathbf{max}$ score (EC: 0.768 \emph{v.s.} 0.730; GO-BP: 0.437 \emph{v.s.} 0.356; GO-MF: 0.563 \emph{v.s.} 0.503; GO-CC: 0.421 \emph{v.s.} 0.414). 
% These performance improvements remain when edge-level message passing is involved. 
Our results demonstrate that {\model} could be a strong candidate for modeling spatial multi-relational data in various domains. 

%%%%%%%%%%%%%%%%%%%%%%%%%%%%%%%%%%%%%%%%%%%%%%%%%%%%%%%%%%%%
%%%%%%%%%%%%%%%%%%%%%%%%%%%%%%%%%%%%%%%%%%%%%%%%%%%%%%%%%%%%

\section{Related Work} \label{sec:related}

%%%%%%%%%%%%%%%%%%%%%%%%%%%%%%%%%%%%%%%%%%%%%%%%%%%%%%%%%%%%

%%%%% Version 2 %%%%%%

\textbf{Multi-relational data modeling.} Multi-relational data are ubiquitous in the real world, \emph{e.g.}, knowledge graphs~\citep{toutanova2015observed} and customer-product networks~\citep{li2014recommendation}. To effectively model multiple types of relations/interactions, existing works have explored embedding-based methods~\citep{bordes2013translating,sun2019rotate}
% path-based methods~\citep{yang2017differentiable,sadeghian2019drum}, 
% multi-headed attention~\citep{vaswani2017attention} 
and different relational graph neural networks (GNNs)~\citep{schlichtkrull2018modeling,vashishth2019composition,zhu2021neural}. Previous relational GNNs mainly focus on model expressivity, and few works~\citep{li2021dimensionwise} study the computational efficiency for relational modeling at scale. In addition, they can hardly model the spatial multi-relational data whose relational linking structures at different spatial ranges are not originally given. 
{\model} is designed to model such kind of data in a computationally efficient way. 

%%%%%%%%%%%%%%%%%%%%%%

%%%%% Version 1 %%%%%%

\iffalse
\textbf{Multi-relational data modeling.} Multi-relational data are ubiquitous in various areas, \emph{e.g.}, knowledge graphs for relational reasoning~\citep{bordes2013translating,toutanova2015observed} and customer-product networks for recommender systems~\citep{li2014recommendation}.
% and protein-protein interaction networks for biological research~\citep{von2005string,li2017scored}. 
To effectively model multiple types of relations/interactions, existing works have explored embedding-based methods for learning relation representations~\citep{bordes2013translating,sun2019rotate}, path-based methods for discovering important relational paths~\citep{yang2017differentiable,sadeghian2019drum}, different graph neural networks (GNNs) for relational message passing~\citep{schlichtkrull2018modeling,zhu2021neural} and multi-headed attention for implicit relational modeling~\citep{vaswani2017attention}. In this work, we contribute the gated relational message passing (GRMP) for efficient multi-relational modeling at scale. 
% All these methods assume that the interaction structure of data is given, while, in this work, we focus on the spatial multi-relational data where there is no canonical interaction structure.
\fi

%%%%%%%%%%%%%%%%%%%%%%

%%%%%%%%%%%%%%%%%%%%%%%%%%%%%%%%%%%%%%%%%%%%%%%%%%%%%%%%%%%%

\textbf{Image modeling.} After the dominance of convolutional vision backbones~\citep{he2016deep,tan2019efficientnet} in 2010s, researchers rethink the architectures for more effective image modeling in 2020s. Vision Transformers~\citep{dosovitskiy2020image,liu2021swin,wang2021pyramid} replace convolutions with the self-attention mechanism~\citep{vaswani2017attention} to better capture non-local interactions and gain SoTA performance. Following such successes, modern convolutional architectures~\citep{liu2022convnet,yang2022focal}, all-MLP architectures~\citep{tolstikhin2021mlp,touvron2021resmlp} and vision GNNs~\citep{han2022vision} are designed to aggregate long-range spatial context. Some earlier works~\citep{chen2019graph,zhang2019dual,zhang2020dynamic} realize non-local modeling by graph convolution on fully-connected or dynamic graphs.
By comparison, {\model} captures multi-range spatial interactions from a novel graph learning perspective, \emph{i.e.}, multi-relational modeling. 
% By comparison, our work proposes a novel solution to capture multi-range spatial interactions by dynamic relational graph construction and relational message passing. 

%%%%%%%%%%%%%%%%%%%%%%%%%%%%%%%%%%%%%%%%%%%%%%%%%%%%%%%%%%%%

\textbf{Protein structure modeling.} 
% Under the guidance of the principle that ``protein structures are determinants of their functions'', 
% A variety of protein structure encoders have been developed to acquire informative protein representations on different structural granularity, 
Diverse geometric encoders are designed to model different levels of protein structures, 
including residue-level structures~\citep{gligorijevic2021structure,zhang2022protein}, atom-level structures~\citep{jing2021equivariant} and protein surfaces~\citep{gainza2020deciphering,sverrisson2021fast}. 
% This work focuses on the residue-level protein structure modeling. 
GearNet~\citep{zhang2022protein} models multi-relational short-range residue interactions with relational graph convolution (RGConv).
% explores multi-relational modeling of residue-level structures with short-range linking and relational graph convolution (RGConv). 
By comparison, our {\model} models a broader range of residue interactions including short, medium and long ranges, and it studies the gated relational message passing (GRMP) as a more efficient and equally effective alternative of RGConv. 

%%%%%%%%%%%%%%%%%%%%%%%%%%%%%%%%%%%%%%%%%%%%%%%%%%%%%%%%%%%%
%%%%%%%%%%%%%%%%%%%%%%%%%%%%%%%%%%%%%%%%%%%%%%%%%%%%%%%%%%%%

\vspace{-0.1mm}
\section{{\model} for Efficient Multi-Range Relational Modeling} \label{sec:method}
\vspace{-0.1mm}

%%%%%%%%%%%%%%%%%%%%%%%%%%%%%%%%%%%%%%%%%%%%%%%%%%%%%%%%%%%%

\vspace{-0.1mm}
\subsection{Problem Definition} \label{sec:method:problem}
\vspace{-0.1mm}

This work studies the data $\mathcal{V} = \{v_i\}_{i=1}^N$ with $N$ data units (\emph{e.g.}, patches in an image, alpha carbons in a protein, \emph{etc.}) with the following structure: (1) \textbf{spatial interaction on multiple ranges}: data units can interact with each other across diverse spatial ranges; (2) \textbf{multi-relational interaction}: multiple interaction types (\emph{i.e.}, relations) exist between different units; (3) \textbf{no canonical linking structure}: the linking structures of multi-range interactions are not specified in the raw data. 

% This work studies the data $\mathcal{V} = \{v_i\}_{i=1}^N$ with $N$ separate units that satisfy: (1) \textbf{no explicit links}: the data units are not connected with each other originally; 
% (2) \textbf{spatially interactive}: there exist concepts of \emph{adjacency} between different data units at the 2D/3D Euclidean space or semantic space; 
% (2) \textbf{spatially interactive on multiple ranges}: different data units can interact with each other across diverse spatial ranges; 
% (3) \textbf{multi-relational interactive}: multiple interaction types (\emph{i.e.}, relations) exist between different units. 
% the interactions between different units can be categorized into multiple types. 

To effectively model such \emph{spatial multi-relational data}, the model is expected to own following capabilities: (1)~\textbf{dynamic multi-range linking}: the model can link relevant data units across different spatial ranges, and the linking structure can change along the whole model if desired; (2)~\textbf{multi-relational linking}: the model divides all links into multiple groups based on their interaction types; (3)~\textbf{efficient multi-relational modeling}: the model can propagate information among interacting units by taking their interaction types into consideration, and it will not introduce too much extra computation when involving more relations. Keeping all these requirements in mind, we next introduce the high-level designs of {\model}, and we present its detailed instantiations in Sec.~\ref{sec:app}.

%%%%%%%%%%%%%%%%%%%%%%%%%%%%%%%%%%%%%%%%%%%%%%%%%%%%%%%%%%%%

\vspace{-0.3mm}
\subsection{Multi-Range Relational Graph Construction} \label{sec:method:graph}

We regard each data unit $v \in \mathcal{V}$ as a node in the graph. 
% At this time, the graph is actually a set of individual nodes, 
For the lack of canonical linking structure among the nodes,
we therefore seek to build edges among them, especially with considering their interactions on multiple spatial ranges and dynamically adjusting the graph structure if desired. 

\textbf{Multi-range relational edge construction.} Given the concepts of spatial and semantic adjacency in a specific domain (\emph{e.g.}, computer vision or protein science), we construct three groups of edges {\small $\mathcal{E}_{\mathrm{short}} = \{ \{ (u,v,r) \} | r \in \mathcal{R}_{\mathrm{short}} \}$}, {\small $\mathcal{E}_{\mathrm{medium}} = \{ \{ (u,v,r) \} | r \in \mathcal{R}_{\mathrm{medium}} \}$} and {\small $\mathcal{E}_{\mathrm{long}} = \{ \{ (u,v,r) \} | r \in \mathcal{R}_{\mathrm{long}} \}$} to represent short-, medium- and long-range spatial interactions, where $(u,v,r)$ denotes an edge from node $u$ to node $v$ with relation $r$, and $\mathcal{R}_{\mathrm{short}}$/$\mathcal{R}_{\mathrm{medium}}$/$\mathcal{R}_{\mathrm{long}}$ is the set of relations for short-/medium-/long-range interactions. To capture the interactions on different spatial ranges, all these edges are gathered into the edge set {\small $\mathcal{E} = \mathcal{E}_{\mathrm{short}} \cup \mathcal{E}_{\mathrm{medium}} \cup \mathcal{E}_{\mathrm{long}} = \{ \{ (u,v,r) \} | r \in \mathcal{R} \}$} with the integrated relation set $\mathcal{R} = \mathcal{R}_{\mathrm{short}} \cup \mathcal{R}_{\mathrm{medium}} \cup \mathcal{R}_{\mathrm{long}}$. Now, the raw data $\mathcal{V}$ is structured as a multi-relational graph $\mathcal{G} = (\mathcal{V}, \mathcal{E}, \mathcal{R})$ that is aware of diverse types of interactions within the data. 

\textbf{Dynamic edge construction.} A model can focus on different levels of semantics at different modeling stages. For example, for the image modeling problem we consider, a typical hierarchical image encoder~\citep{he2016deep,liu2021swin} is split into multiple stages, and it tends to encode low-level features in shallower stages and encode high-level semantics in deeper stages. To accommodate such a hierarchical modeling manner, our graph construction scheme will be dynamically performed before each modeling stage based on the input features (\emph{e.g.}, node coordinates or representations) of the stage, so that each modeling stage can explore its specific neighborhood structures of data units. 

%%%%%%%%%%%%%%%%%%%%%%%%%%%%%%%%%%%%%%%%%%%%%%%%%%%%%%%%%%%%

\vspace{-0.3mm}
\subsection{Gated Relational Message Passing} \label{sec:method:layer}

To perform multi-relational modeling over the constructed graph $\mathcal{G}$, the typical method Relational Graph Convolution (RGConv)~\citep{schlichtkrull2018modeling} employs a unique convolutional kernel matrix $W_r$ to aggregate the messages of relation $r$, leading to $|\mathcal{R}|$ different kernel matrices in total for the message aggregation from neighborhoods. Taking node $v$ as an example, the RGConv layer updates its representation from $z_v$ to $z'_v$ as below:
\begin{equation} \label{eq:rgconv}
z^{\mathrm{aggr}}_v = \sum_{r \in \mathcal{R}} \sum_{u \in \mathcal{N}_r(v)} \frac{1}{|\mathcal{N}_r(v)|} \!\; W_r z_u, \quad z'_v = W^{\mathrm{self}} z_v + z^{\mathrm{aggr}}_v ,
\end{equation}
where $z^{\mathrm{aggr}}_v$ is the aggregated message for node $v$, $\mathcal{N}_r(v) = \{u | (u,v,r) \in \mathcal{E} \}$ are $v$'s neighbors with relation $r$, and $W^{\mathrm{self}}$ is the weight matrix for self-update (we omit all bias terms for brevity). 

We assume that, when introducing a new relation, the in-degree of each node will increase by $\bar{d}$ on average. By taking the efficient implementation of RGConv with sparse matrix multiplication, it can be shown that the floating-point operations (FLOPs) of RGConv with $C$-dimensional input and output node features has the following form (see Appendix~\ref{supp:sec:flops} for proof):
\vspace{0.5mm}
\begin{equation} \label{eq:rgconv_flops}
\mathrm{FLOPs}(\mathrm{RGConv}) = |\mathcal{R}| \cdot (2 \bar{d} |\mathcal{V}| C + 2 |\mathcal{V}| C^2) + 2 |\mathcal{V}| C^2 + |\mathcal{V}| C .
\vspace{0.6mm}
\end{equation}
Therefore, the computational cost will scale with the relation number $|\mathcal{R}|$ by the factor of $2 \bar{d} |\mathcal{V}| C + 2 |\mathcal{V}| C^2$. Considering both the node number $|\mathcal{V}|$ and the feature dimension $C$ could be large in many applications, the $2 |\mathcal{V}| C^2$ term will be the main obstacle of exploring more relations with moderate extra computation, which hurts the model capacity under a strict constraint on computational cost.
% This scaling behavior will hurt the model capacity in the applications with strict limitations on computational cost, like the image modeling problems we consider. 

For more efficient multi-relational modeling, we aim at an approach that (1) can effectively model the interactions among relational messages and among feature channels, and (2) owns a gentle scaling behavior when modeling increasing number of relations within the data. To attain this goal, we propose the Gated Relational Message Passing (GRMP). 
Inspired by light-weight separable graph convolution methods~\citep{balcilar2020spectral,li2021dimensionwise} that aggregate neighborhood features in a channel-wise way, GRMP decomposes the relation-channel entangled aggregation of RGConv into (\emph{i}) the aggregation of intra- and inter-relation messages on each individual channel and (\emph{ii}) the aggregation of different feature channels. 
% Inspired by depthwise separable convolution~\citep{chollet2017xception,howard2017mobilenets}, GRMP separates the aggregation of messages within the same relation, the aggregation of messages across different relations and the aggregation of different feature channels. 
Specifically, it consecutively performs following steps: {\small \circled{1}} a pre-layer node-wise channel aggregation with the weight matrix $W^{\mathrm{in}}$, {\small \circled{2}} an intra-relation message aggregation through channel-wise graph convolution, {\small \circled{3}} an inter-relation message aggregation by node-adaptive weighted summation, {\small \circled{4}} a post-layer node-wise channel aggregation with the weight matrix $W^{\mathrm{out}}$, and {\small \circled{5}} the final node representation update by \emph{regarding the aggregated neighborhood information as gate}. Formally, GRMP updates the representation of node $v$ from $z_v$ to $z'_v$ as below:
\begin{equation} \label{eq:grmp}
z^{\mathrm{aggr}}_v = \overbrace{ W^{\mathrm{out}} \bigg( \overbrace{\sum_{r \in \mathcal{R}} \alpha_r(v) \cdot \underbrace{\sum_{u \in \mathcal{N}_r(v)} \frac{1}{|\mathcal{N}_r(v)|} \!\; w_r \odot ( \underbrace{W^{\mathrm{in}} z_u}_\text{step {\tiny \circled{1}}} )}_\text{step {\tiny \circled{2}}} }^\text{step {\tiny \circled{3}}} \bigg) }^\text{step {\tiny \circled{4}}} , 
\quad z'_v = \underbrace{ W^{\mathrm{self}} z_v \odot z^{\mathrm{aggr}}_v }_\text{step {\tiny \circled{5}}} ,
\end{equation}
where $\alpha(v) = W^{\alpha} z_v \in \mathbb{R}^{|\mathcal{R}|}$ are the attentive weights assigned to all relations on node $v$ ($W^{\alpha}$ is the weight matrix for node-adaptive relation weighting), $w_r$ is the channel-wise convolutional kernel vector for relation $r$ (with the same shape as the node feature vector after step {\small \circled{1}}), and $\odot$ denotes the Hadamard product. The definitions of $z^{\mathrm{aggr}}_v$, $\mathcal{N}_r(v)$ and $W^{\mathrm{self}}$ follow Eq.~\eqref{eq:rgconv}, and all biases are omitted. We analyze the key components of GRMP in Appendix~\ref{supp:sec:ablation:grmp}. We also provide a graphical illustration of the GRMP layer in Appendix~\ref{supp:sec:grmp_fig}.

Under the efficient implementation with sparse matrix multiplication, GRMP consumes the FLOPs as below when taking $C$-dimensional input and output node features (see Appendix~\ref{supp:sec:flops} for proof): 
\vspace{0.5mm}
\begin{equation} \label{eq:grmp_flops}
\mathrm{FLOPs}(\mathrm{GRMP}) = |\mathcal{R}| \cdot (2 \bar{d} + 7) |\mathcal{V}| C  + 6|\mathcal{V}| C^2 .
\vspace{0.6mm}
\end{equation}
\vspace{-5.3mm}

%%%%%%%%%%%%%%%%%%%%%%%%%%%%%%%%%%%%%%%%%%%%%%%%%%%%%%%%%%%%
\begin{wrapfigure}{R}{0.36\textwidth}
\vspace{-5mm}
\centering
\includegraphics[width=0.36\textwidth]{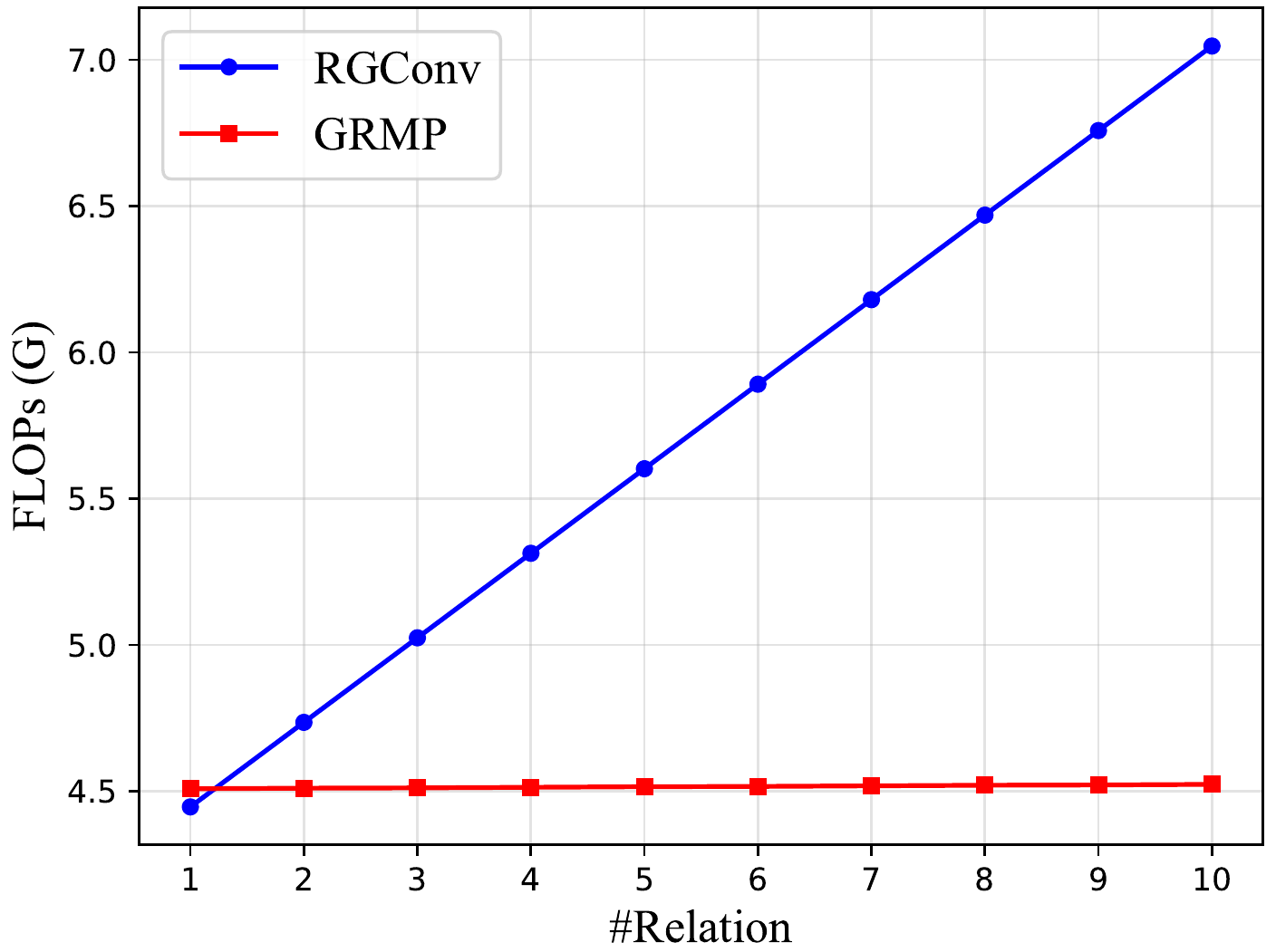}
\vspace{-8.0mm}
\caption{FLOPs trend of RGConv and GRMP under different relation numbers, evaluated on {\model}-T for image modeling.} 
\label{fig:flops_compare}
\vspace{-6mm}
\end{wrapfigure}
%%%%%%%%%%%%%%%%%%%%%%%%%%%%%%%%%%%%%%%%%%%%%%%%%%%%%%%%%%%%
Therefore, the relation number $|\mathcal{R}|$ scales the computational cost of GRMP with the scaling factor $(2 \bar{d} + 7) |\mathcal{V}| C$. Compared to the scaling factor $2 \bar{d} |\mathcal{V}| C + 2 |\mathcal{V}| C^2$ of RGConv, this factor gets rid of the quadratic reliance on feature dimension and thus leads to a gentler scaling behavior when increasing the number of considered relations. In Fig.~\ref{fig:flops_compare}, we compare the FLOPs of RGConv and GRMP when they respectively serve as the building block of {\model}-T for image modeling (image resolution: $224 \times 224$; ``T'' denotes the tiny-scale model). In this illustrative comparison, we simply connect each node (\emph{i.e.}, image patch) with its $K$-nearest neighbors in terms of representation similarity, and the connection with the $k$-th nearest neighbor is regarded as the $k$-th relation, leading to $K$ relations in total. We can observe that, when increasing the number of neighbors and thus the number of relations, the computational cost of GRMP-based model increases much more gently than the RGConv-based one.
This merit enhances the efficiency and effectiveness of GRMP-based models in real-world problems like image and protein structure modeling, as studied in the second paragraph of Sec.~\ref{sec:exp:ablation} and in the Appendix~\ref{supp:sec:ablation:layer}.
% This merit of GRMP enables it to explore more relations within the data under the given computational budget, which enhances the model capacity and performance, as shown in Tab.~\ref{tab:ablation:layer}. 

%%%%%%%%%%%%%%%%%%%%%%%%%%%%%%%%%%%%%%%%%%%%%%%%%%%%%%%%%%%%
%%%%%%%%%%%%%%%%%%%%%%%%%%%%%%%%%%%%%%%%%%%%%%%%%%%%%%%%%%%%

\section{Instantiations of {\model}} \label{sec:app}
\vspace{-0.8mm}

In the main paper, we focus on two application domains, \emph{i.e.}, computer vision and protein science, where modeling spatial multi-relational data (\emph{i.e.}, images and protein structures) can solve important problems. In Appendix~\ref{supp:sec:KG}, we further study the effectiveness of {\model} on modeling an important kind of multi-relational data without spatial information, \emph{i.e.}, knowledge graphs.
% We will introduce how to instantiate {\model}s to model images and protein structures effectively and efficiently. 

%%%%%%%%%%%%%%%%%%%%%%%%%%%%%%%%%%%%%%%%%%%%%%%%%%%%%%%%%%%%

\vspace{-0.6mm}
\subsection{{\model} for Image Modeling} \label{sec:app:cv}

\vspace{-0.7mm}
\subsubsection{Relational Edges for Short-, Medium- and Long-range Interactions} \label{sec:app:cv:edge}
\vspace{-0.3mm}

Following previous practices~\citep{dosovitskiy2020image,liu2021swin}, we split an image into local patches and regard these patches as the node set $\mathcal{V}$ of our multi-relational graph. Upon these patches, we construct following relational edges to capture different ranges of spatial interactions within an image (see Fig.~\ref{fig:image_edge} for a graphical illustration):
\vspace{-4.5mm}
\begin{itemize}
    \parbox[t]{\dimexpr\textwidth-\leftmargin}{
        \begin{wrapfigure}{R}{0.5\textwidth}
        \vspace{-5.1mm}
        \centering
        \includegraphics[width=0.5\textwidth]{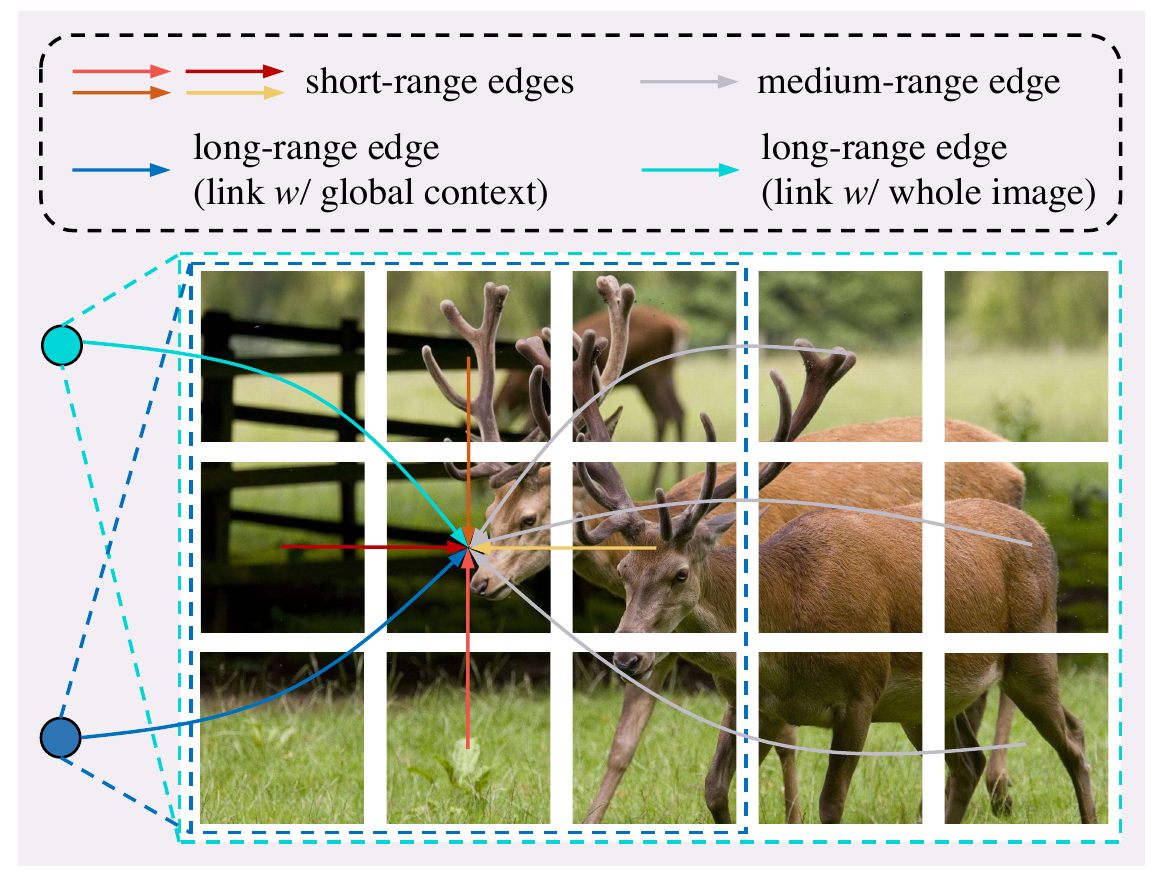}
        \vspace{-7.5mm}
        \caption{Multi-range relational edges for image.} 
        \label{fig:image_edge}
        \vspace{-5.1mm}
        \end{wrapfigure}
    \item \textbf{Edges for short-range interactions} ($|\mathcal{R}_{\mathrm{short}}| = 4$). We connect each patch with its up, down, left and right patches and regard each direction of adjacency as a relation. These edges capture the one-hop spatial neighbors and thus shortest-range spatial interactions of each image patch. 
    \vspace{0.5mm}
    \item \textbf{Edges for medium-range interactions} ($|\mathcal{R}_{\mathrm{medium}}| = 1$). 
    In the medium range, a patch can interact with other patches sharing similar semantics (\emph{e.g.}, different body parts of deer in Fig.~\ref{fig:image_edge}). We thus connect each patch with its $K$-nearest neighbors in terms of representation similarity measured by negative Euclidean distance (we analyze the sensitivity of $K$ in Appendix~\ref{supp:sec:sensitivity}), 
    % (we use the negative Euclidean distance as the similarity metric), 
    and these edges are with the same relation. All edges connecting two patches within the same 2\texttimes2 window are removed to avoid short-range linking. 
    % This set of edges connect each patch with its K-nearest neighbors in terms of hidden representation similarity (we use the negative Euclidean distance as the similarity metric), and these edges are with the same relation. All edges connecting two patches within the same 9-patch square window are removed to ensure non-local linking. The patches connected by these edges are semantically relevant while not necessarily nearby in the image, and thus medium-range spatial interactions can be captured. 
    }
    \vspace{-0.5mm}
    \item \textbf{Edges for long-range interactions} ($|\mathcal{R}_{\mathrm{long}}| = 2$). To model long-range interactions, we introduce two kinds of \emph{virtual nodes} and the associated edges. (1) \emph{A virtual node for whole-image representation} is derived by global average pooling over all patch representations, and this virtual node is linked to all patches. (2) \emph{Per-patch virtual nodes for surrounding global context} are got by a stack of depth-wise 2D convolutions~\citep{yang2022focal} that aggregate each patch's contextual information with large receptive field and low computation; an edge links each of these virtual nodes to its corresponding patch with a different long-range interacting relation against that in (1), due to the different global context levels represented by two kinds of virtual nodes. 
\end{itemize}
\vspace{-1.3mm}

By gathering all these edges representing 7 different relations, we have the edge set $\mathcal{E}$, the relation (\emph{i.e.}, edge type) set $\mathcal{R}$ and the full graph $\mathcal{G} = (\mathcal{V}, \mathcal{E}, \mathcal{R})$ for multi-relational image modeling. 

%%%%%%%%%%%%%%%%%%%%%%%%%%%%%%%%%%%%%%%%%%%%%%%%%%%%%%%%%%%%

\vspace{-0.5mm}
\subsubsection{Model Architecture} \label{sec:app:cv:arch}
\vspace{-0.4mm}

\textbf{General architecture.} In general, we follow the hierarchical image modeling architecture proposed by Swin Transformer~\citep{liu2021swin}, which is verified to be a superior architecture and is applied to many vision backbones~\citep{liu2022convnet,yang2021focal,yang2022focal}. Specifically, the whole model is divided into four stages that (1) reduce the number of patches (\emph{i.e.}, nodes in our graph) to a quarter across consecutive stages, and (2) use increasing number of feature channels $[C, 2C, 4C, 8C]$ for all stages. Each stage contains multiple modeling blocks, where we construct each block with a GRMP layer (Sec.~\ref{sec:method:layer}) for relational message passing and a feed-forward network (FFN)~\citep{vaswani2017attention} for feature transformation. We adjust the number of feature channels and the number of blocks in each stage to get a model series with increasing capacity, \emph{i.e.}, \textbf{{\model}-T}, \textbf{{\model}-S} and \textbf{{\model}-B}. The detailed architectures of these models are displayed in Appendix~\ref{supp:sec:arch}. 

\textbf{Graph construction layers.} To adapt the multi-stage modeling manner, we put a graph construction layer before each modeling stage of {\model}-T/S/B. In this way, based on the locations and representations of the patches fed into each stage, the multi-relational graph $\mathcal{G}$ will be reconstructed to adapt these stage inputs. In particular, along the modeling stages, the edges for medium-range interactions are expected to capture the semantic neighbors on different semantic levels (\emph{e.g.}, from the relevance of low-level features to the relevance of high-level semantics), as studied in Sec.~\ref{sec:exp:visualization}. 

%%%%%%%%%%%%%%%%%%%%%%%%%%%%%%%%%%%%%%%%%%%%%%%%%%%%%%%%%%%%

\vspace{-0.3mm}
\subsection{{\model} for Protein Structure Modeling} \label{sec:app:protein}

\vspace{-0.3mm}
\subsubsection{Relational Edges for Short-, Medium- and Long-range Interactions} \label{sec:app:protein:edge}

In this work, we consider the alpha carbon (\emph{i.e.}, C$\alpha$) graph as the representation of protein structure, which is an informative and light-weight summary of the overall protein 3D structure and is widely used in the literature~\citep{gligorijevic2021structure,baldassarre2021graphqa,zhang2022protein} (see Appendix~\ref{supp:sec:protein_intro} for a preliminary introduction to protein structure). In specific, we extract all C$\alpha$s as the node set $\mathcal{V}$ of our graph, which, at this time, is actually a set of separate points in the 3D space, since there is no chemical bond among C$\alpha$s. To describe the multi-range spatial interactions within a protein, we build following relational edges (see Fig.~\ref{fig:protein_edge} for a graphical illustration):
\vspace{-4.0mm}
\begin{itemize}
    \parbox[t]{\dimexpr\textwidth-\leftmargin}{
        \begin{wrapfigure}{R}{0.39\textwidth}
        \vspace{-5.6mm}
        \centering
        \includegraphics[width=0.40\textwidth]{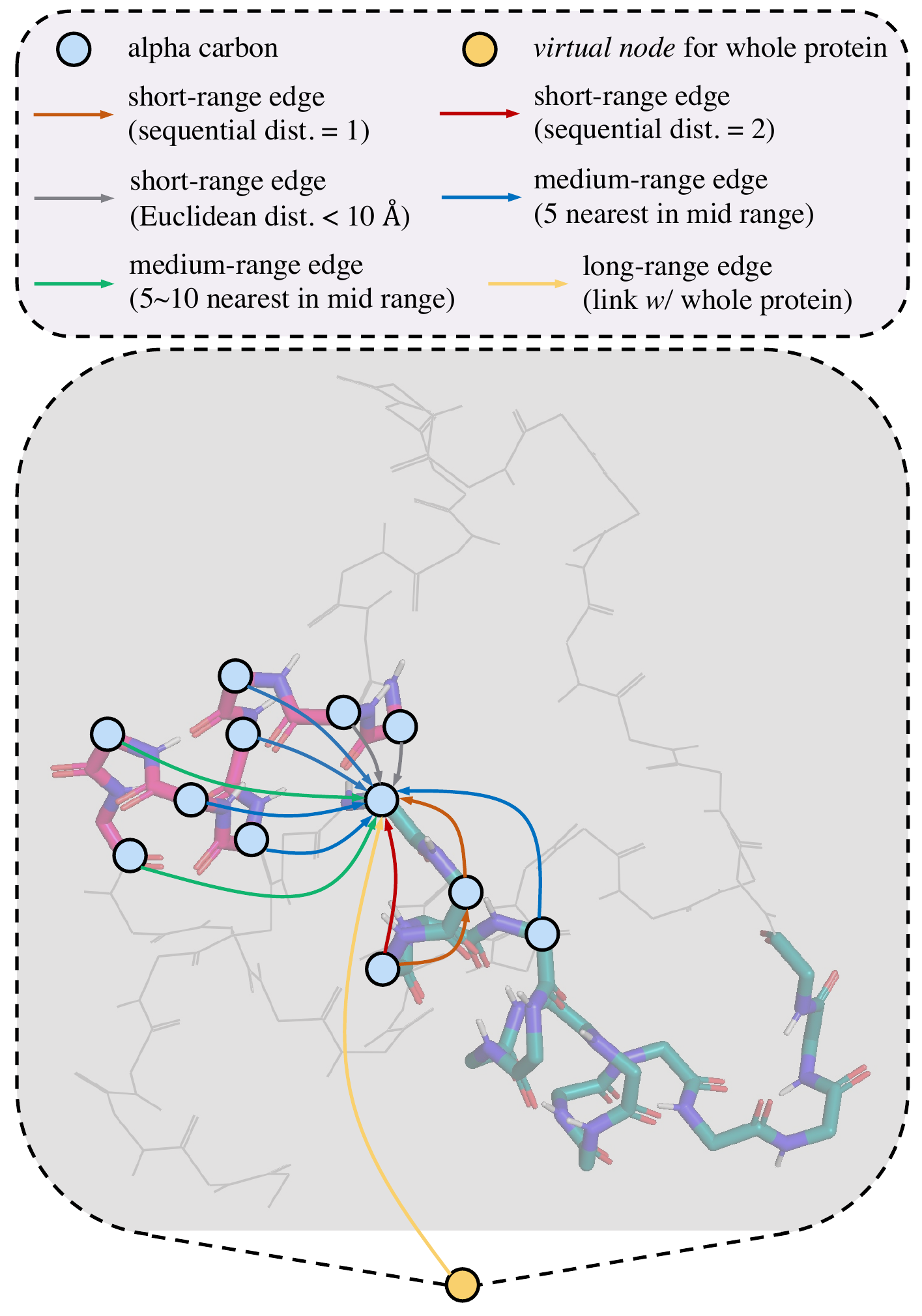}
        \vspace{-9.3mm}
        \caption{Multi-range relational edges for protein C$\alpha$s. \emph{Abbr.}, dist.: distance.} 
        \label{fig:protein_edge}
        \vspace{-3.9mm}
        \end{wrapfigure}
    \item \textbf{Edges for short-range interactions} ($|\mathcal{R}_{\mathrm{short}}| = 6$). We adopt two kinds of short-range edges proposed by~\citet{zhang2022protein}. (1) \emph{Sequential edges} connect the C$\alpha$ nodes that are within the distance of 2 on the protein sequence, where each of the sequential distances \{-2,-1,0,1,2\} is regarded as a single relation (\emph{i.e.}, 5 relations in total). (2) \emph{Radius edges} connect the C$\alpha$ nodes within the Euclidean distance of 10 angstroms, and all radius edges have the same relation. 
    \vspace{-3.4mm}
    \item \textbf{Edges for medium-range interactions} ($|\mathcal{R}_{\mathrm{medium}}| = 2$). To capture medium-range interactions exclusively, for each C$\alpha$ node, we first filter out all its neighbors within the sequential distance of 5 or within the Euclidean distance of 10 angstroms. 
    We then connect it with the remaining nodes that are \emph{5 nearest} and \emph{5$\sim$10 nearest} to it (measured by Euclidean distance), and the connections with these two sets of neighbors are regarded as two different relations.   
    \vspace{0.6mm}
    \item \textbf{Edges for long-range interactions} ($|\mathcal{R}_{\mathrm{long}}| = 1$). To capture the interactions beyond short- and medium-range interactions above, we introduce a \emph{virtual node} representing the whole protein by taking global average pooling over all C$\alpha$ representations, and this virtual node is linked to all C$\alpha$ nodes with a single relation. These edges make each C$\alpha$ aware of the status of all other C$\alpha$s, and thus the long-range interactions beyond short and medium ranges can be captured. 
    }
\end{itemize}
\vspace{-1mm}

We gather all these edges with 9 different relations into the edge set $\mathcal{E}$ and the relation set $\mathcal{R}$, which, together with $\mathcal{V}$, derive the full graph $\mathcal{G} = (\mathcal{V}, \mathcal{E}, \mathcal{R})$ for multi-relational protein structure modeling. 

%%%%%%%%%%%%%%%%%%%%%%%%%%%%%%%%%%%%%%%%%%%%%%%%%%%%%%%%%%%%

\vspace{-0.3mm}
\subsubsection{Model Architecture} \label{sec:app:protein:arch}

This work focuses on comparing the graph construction and message passing schemes of {\model} against the SoTA GearNet~\citep{zhang2022protein}, and we thus follow its single-stage model architecture for fair comparison. Specifically, {\model} performs graph construction once before this only modeling stage, and the input node feature is the one-hot encoding of each C$\alpha$'s corresponding amino acid. Upon these inputs, six GRMP layers (Sec.~\ref{sec:method:layer}) are stacked for relational modeling. After each layer, the sum pooling over all C$\alpha$ representations is deemed as the whole-protein representation, and these per-layer protein representations are concatenated to produce the final output. Upon this output, {\model} performs a downstream task by appending a task-specific prediction head. We leave the design of the protein structure encoder with multiple modeling stages as a future work. 

\emph{Note that}, in {\model}, all graph construction and message passing operations rely only on the quantities (\emph{e.g.}, sequential and Euclidean distance) that are invariant to translation, rotation and reflection. Therefore, {\model} satisfies \textbf{E(3)-invariance}~\citep{mumford1994geometric}.

%%%%%%%%%%%%%%%%%%%%%%%%%%%%%%%%%%%%%%%%%%%%%%%%%%%%%%%%%%%%

%%%%%%%%%%%%%%%%%%%%%%%%%%%%%%%%%%%%%%%%%%%%%%%%%%%%%%%%%%%%
\section{Experiments} \label{sec:exp}
\vspace{-1.5mm}

% In this section, we first compare {\model} with state-of-the-arts on three image modeling tasks (\emph{i.e.}, image classification, object detection and semantic segmentation) and two protein function prediction benchmarks. We then ablate the key model components and visualize the built graph structures. 

%%%%%%%%%%%%%%%%%%%%%%%%%%%%%%%%%%%%%%%%%%%%%%%%%%%%%%%%%%%%

\subsection{Performance Comparison on Image Modeling} \label{sec:exp:cv}
\vspace{-1mm}

\subsubsection{Baseline Methods} \label{sec:exp:cv:baseline}
\vspace{-1mm}

We do point-by-point comparisons between our {\model} series, the SoTA ConvNeXt~\citep{liu2022convnet} and FocalNet~\citep{yang2022focal} series, and other standard series including Swin Transformer~\citep{liu2021swin}, FocalAtt~\citep{yang2021focal} and ViG~\citep{han2022vision}. For completeness, we also report the results of EffNet~\citep{tan2019efficientnet}, EffNetV2~\citep{tan2021efficientnetv2}, ViT~\citep{dosovitskiy2020image}, DeiT~\citep{touvron2021training}, PVT~\citep{wang2021pyramid}, Mixer~\citep{tolstikhin2021mlp}, gMLP~\citep{liu2021pay} and ResMLP~\citep{touvron2021resmlp} in applicable cases. 

%%%%%%%%%%%%%%%%%%%%%%%%%%%%%%%%%%%%%%%%%%%%%%%%%%%%%%%%%%%%

\vspace{-1.8mm}
\subsubsection{Image Classification on ImageNet-1K} \label{sec:exp:cv:cls}
\vspace{-0.7mm}

%%%%%%%%%%%%%%%%%%%%%%%%%%%%%%%%%%%%%%%%%%%

%%% ImageNet Classification (224, from scratch) %%%%%%%%%%%%%%%
\begin{wraptable}{r}{0.48\textwidth}
\vspace{-7.7mm}
\caption{ImageNet-1K classification results. We measure throughput on a V100 GPU. {\dag} denotes the model pre-trained on ImageNet-22K. $224^2$ and $384^2$ denote the image size. ``$\uparrow$384'' means fine-tuning on 384\texttimes384 images for 30 epochs.} \label{tab:imagenet-224}
\vspace{0.4mm}
\begin{adjustbox}{max width=1.0\linewidth}
\begin{tabular}{l|ccc|c}
\toprule  
\multirow{2}{*}{\bf{Model}} & \bf{\#Params.} & \bf{FLOPs} & \bf{Throughput} & \bf{Top-1}  \\
& \bf{(M)} & \bf{(G)} & \bf{(imgs/s)} & \bf{Acc (\%)} \\
\midrule \midrule
\multicolumn{5}{c}{\small \bf{Train on ImageNet-1K from scratch}} \\
EffNet-B7 & 66 & 37.0 & 55 & 84.3 \\ 
EffNetV2-L & 120 & 53.0 & 84 & 85.7 \\
\midrule
ViT-S/16 & 22 & 4.6 & 939 & 79.9 \\ 
ViT-B/16 & 87 & 17.6 & 330 & 81.8 \\
DeiT-S/16 & 22 & 4.6 & 979 & 79.8 \\
DeiT-S/16 & 87 & 17.6 & 302 & 81.8 \\
PVT-Small & 25 & 3.8 & 794 & 79.8 \\
PVT-Medium & 44 & 6.7 & 517 & 81.2 \\
PVT-Large & 61 & 9.8 & 352 & 81.7 \\
\midrule
Mixer-B/16 & 60 & 12.7 & 455 & 76.4 \\
gMLP-S & 20 & 4.5 & 785 & 79.6 \\
gMLP-B & 73 & 15.8 & 301 & 81.6 \\
ResMLP-S24 & 30 & 6.0 & 871 & 79.4 \\
ResMLP-B24 & 129 & 23.0 & 61 & 81.0 \\
\midrule
Swin-T & 28 & 4.9 & 760 & 81.2 \\
Pyramid ViG-S & 27 & 4.6 & - & 82.1 \\
FocalAtt-T & 29 & 4.9 & 319 & 82.2 \\
FocalNet$_{\textrm{(LRF)}}$-T & 29 & 4.5 & 696 & \bf{82.3} \\
ConvNeXt-T & 29 & 4.5 & 775 & 82.1 \\
\rowcolor{Gray} 
{\model}-T & 29 & 4.6 & 530 & \bf{82.3} \\
\midrule
Swin-S & 50 & 8.7 & 435 & 83.1 \\
Pyramid ViG-M & 52 & 8.9 & - & 83.1 \\
FocalAtt-S & 51 & 9.4 & 192 & 83.5 \\
FocalNet$_{\textrm{(LRF)}}$-S & 50 & 8.7 & 406 & 83.5 \\
ConvNeXt-S & 50 & 8.7 & 447 & 83.1 \\
\rowcolor{Gray} 
{\model}-S & 50 & 8.8 & 314 & \bf{83.6} \\
\midrule
Swin-B & 88 & 15.4 & 291 & 83.5 \\
Pyramid ViG-B & 93 & 16.8 & - & 83.7 \\
FocalAtt-B & 90 & 16.4 & 138 & 83.8 \\
FocalNet$_{\textrm{(LRF)}}$-B & 89 & 15.4 & 269 & 83.9 \\
ConvNeXt-B & 89 & 15.4 & 292 & 83.8 \\
\rowcolor{Gray} 
{\model}-B & 89 & 15.6 & 224 & \bf{84.1} \\
\midrule
Swin-B$\!\;\uparrow\!\;$384 & 88 & 47.1 & 85 & 84.5 \\
ConvNeXt-B$\!\;\uparrow\!\;$384 & 89 & 45.0 & 96 & 85.1 \\
\rowcolor{Gray} 
{\model}-B$\!\;\uparrow\!\;$384 & 90 & 46.6 & 69 & \bf{85.3} \\
\midrule
\multicolumn{5}{c}{\small \bf{Pre-train on ImageNet-22K \& Fine-tune on ImageNet-1K}} \\
Swin-B{\dag} ($224^2$) & 88 & 15.4 & 291 & 85.2 \\
FocalNet$_{\textrm{(SRF)}}$-B{\dag} ($224^2$) & 88 & 15.3 & 280 & 85.6 \\
ConvNeXt-B{\dag} ($224^2$) & 89 & 15.4 & 292 & \bf{85.8} \\
\rowcolor{Gray}
{\model}-B{\dag} ($224^2$) & 89 & 15.6 & 224 & 85.7 \\
\midrule
Swin-B{\dag} ($384^2$) & 88 & 47.0 & 85 & 86.4 \\
FocalNet$_{\textrm{(SRF)}}$-B{\dag} ($384^2$) & 88 & 44.8 & 94 & 86.5 \\
ConvNeXt-B{\dag} ($384^2$) & 89 & 45.1 & 96 & 86.8 \\
\rowcolor{Gray} 
{\model}-B{\dag} ($384^2$) & 90 & 46.6 & 69 & \bf{87.0} \\
\bottomrule
\end{tabular}
\end{adjustbox}
\vspace{-6.7mm}
\end{wraptable} 

%%%%%%%%%%%%%%%%%%%%%%%%%%%%%%%%%%%%%%%%%%%

\textbf{Setups.} In this set of experiments, we benchmark the classification performance of different backbones on ImageNet-1K~\citep{deng2009imagenet} (with 1.28M training and 50K validation images from 1,000 classes) in terms of top-1 accuracy. We consider both (1) training on ImageNet-1K from scratch and (2) pretraining on ImageNet-22K followed by ImageNet-1K fine-tuning. For fair comparison, we follow the standard training configurations of Swin Transformer~\citep{liu2021swin} with minor changes. Detailed model and training configurations are stated in Appendix~\ref{supp:sec:setup:cls}.  

\textbf{Results.} In Tab.~\ref{tab:imagenet-224}, for training on ImageNet-1K from scratch, {\model}s outperform or align previous SoTA baselines on $224^2$ image size,~\emph{i.e.}, {\model}-T \emph{v.s.} FocalNet$_{\textrm{(LRF)}}$-T: \textbf{82.3}\% \emph{v.s.} \textbf{82.3}\%; {\model}-S \emph{v.s.} FocalNet$_{\textrm{(LRF)}}$-S: \textbf{83.6}\% \emph{v.s.} 83.5\%; {\model}-B \emph{v.s.} FocalNet$_{\textrm{(LRF)}}$-B: \textbf{84.1}\% \emph{v.s.}~83.9\%. Following Swin Transformer, we lift the resolution to $384^2$ for 30 epochs fine-tuning after training {\model}-B for 300 epochs on the $224^2$ resolution, this model gains \textbf{85.3}\% top-1 accuracy, outperforming ConvNeXt-B. These results demonstrate the effectiveness of {\model}s on modeling images with different resolutions. 

For ImageNet-22K pre-training, {\model}-B is on par with the previous SoTA ConvNeXt-B and clearly outperforms Swin-B. The pre-training on large scale is widely regarded as the strength of the models with few inductive biases like Swin-B; while our results show that the well-designed {\model}-B with more inductive biases could also be effective, aligning with ConvNeXt's finding. 

% For ImageNet-22K pre-training, {\model}-B performs better than FocalNet$_{\textrm{(SRF)}}$-B and comparably to ConvNeXt-B. Compared to all other methods with 2, 2, 18 and 2 blocks in four modeling stages, ConvNeXt-B owns 3, 3, 27 and 3 blocks per stage, which could favor the modeling on the larger-scale ImageNet-22K. Inspired by this, we will explore a better {\model} architecture for ImageNet-22K modeling in the future work. 

\textbf{Throughput analysis.} The throughput of {\model} is higher than FocalAtt while lower than FocalNet and ConvNeXt. We point out that 2D convolutions (\emph{i.e.}, the core of FocalNet and ConvNeXt) are well supported by CUDA kernels, while such supports are still ongoing for graph operations~\citep{chen2020fusegnn,min2021large}. {\model}'s further speedup is expected under maturer CUDA supports.

%%%%%%%%%%%%%%%%%%%%%%%%%%%%%%%%%%%%%%%%%%%%%%%%%%%%%%%%%%%%

%%% COCO object detection and instance segmentation %%%%%%%%%%%%%%%
\begin{table}[t]
\begin{spacing}{1.0}
    \centering
    \caption{\small COCO object detection and instance segmentation results with Mask R-CNN~\citep{he2017mask}.}
    \label{tab:coco}
    % \vspace{-1.8mm}
    \begin{adjustbox}{max width=1.0\linewidth}
        \begin{tabular}{l|cc|cccccc|cccccc}
        \toprule
        \multirow{2}{*}{\bf{Model}} & \bf{\#Params.} & \bf{FLOPs} & \multicolumn{6}{c|}{\bf{Mask R-CNN 1{\texttimes}}} & \multicolumn{6}{c}{\bf{Mask R-CNN 3{\texttimes}}} \\ \cline{4-15}
            & \bf{(M)} & \bf{(G)} & $\mathbf{AP^b}$ & $\mathbf{AP^b_{50}}$ & $\mathbf{AP^b_{75}}$ & $\mathbf{AP^m}$ & $\mathbf{AP^m_{50}}$ & $\mathbf{AP^m_{75}}$ & $\mathbf{AP^b}$ & $\mathbf{AP^b_{50}}$ & $\mathbf{AP^b_{75}}$ & $\mathbf{AP^m}$ & $\mathbf{AP^m_{50}}$ & $\mathbf{AP^m_{75}}$ \\
        \midrule \midrule
        PVT-Small & 44.1 & 245 & 40.4 & 62.9 & 43.8 & 37.8 & 60.1 & 40.3 & 43.0 & 65.3 & 46.9 & 39.9 & 62.5 & 42.8 \\
        Swin-T & 47.8 & 264 & 43.7 & 66.6 & 47.7 & 39.8 & 63.3 & 42.7 & 46.0 & 68.1 & 50.3 & 41.6 & 65.1 & 44.9 \\
        % FocalNet-T (SRF) & 48.6 & 267 & 45.9 & 68.3 & 50.1 & 41.3 & 65.0 & 44.3 & 47.6 & 69.5 & 52.0 & 42.6 & 66.5 & 45.6 \\
        FocalAtt-T & 48.8 & 291 & 44.8 & 67.7 & 49.2 & 41.0 & 64.7 & 44.2 & 47.2 & 69.4 & 51.9 & 42.7 & \textbf{66.5} & 45.9 \\
        FocalNet$_{\textrm{(LRF)}}$-T & 48.9 & 268 & \textbf{46.1} & 68.2 & \textbf{50.6} & 41.5 & 65.1 & 44.5 & \textbf{48.0} & \textbf{69.7} & \textbf{53.0} & \textbf{42.9} & \textbf{66.5} & \textbf{46.1} \\
        \rowcolor{Gray}
        {\model}-T & 49.8 & 271 & \textbf{46.1} & \textbf{68.7} & 50.5 & \textbf{41.6} & \textbf{65.5} & \textbf{44.6} & 47.8 & 69.5 & 52.3 & \textbf{42.9} & \textbf{66.5} & \textbf{46.1} \\
        \midrule
        PVT-Medium & 63.9 & 302 & 42.0 & 64.4 & 45.6 & 39.0 & 61.6 & 42.1 & 44.2 & 66.0 & 48.2 & 40.5 & 63.1 & 43.5 \\
        Swin-S & 69.1 & 354 & 46.5 & 68.7 & 51.3 & 42.1 & 65.8 & 45.2 & 48.5 & 70.2 & 53.5 & 43.3 & 67.3 & 46.6 \\
        % FocalNet-S (SRF) & 70.8 & 356 & 48.0 & 69.9 & 52.7 & 42.7 & 66.7 & 45.7 & 48.9 & 70.1 & 53.7 & 43.6 & 67.1 & 47.1 \\
        FocalAtt-S & 71.2 & 401 & 47.4 & 69.8 & 51.9 & 42.8 & 66.6 & 46.1 & 48.8 & 70.5 & 53.6 & 43.8 & 67.7 & 47.2 \\
        FocalNet$_{\textrm{(LRF)}}$-S & 72.3 & 365 & 48.3 & \textbf{70.5} & 53.1 & 43.1 & \textbf{67.4} & \textbf{46.2} & 49.3 & \textbf{70.7} & 54.2 & 43.8 & \textbf{67.9} & 47.4 \\
        \rowcolor{Gray}
        {\model}-S & 72.8 & 364 & \textbf{48.4} & \textbf{70.5} & \textbf{53.2} & \textbf{43.2} & \textbf{67.4} & \textbf{46.2} & \textbf{49.4} & \textbf{70.7} & \textbf{54.5} & \textbf{44.0} & 67.6 & \textbf{47.5} \\
        \midrule
        PVT-Large & 81.0 & 364 & 42.9 & 65.0 & 46.6 & 39.5 & 61.9 & 42.5 & 44.5 & 66.0 & 48.3 & 40.7 & 63.4 & 43.7 \\
        Swin-B & 107.1 & 497 & 46.9 & 69.2 & 51.6 & 42.3 & 66.0 & 45.5 & 48.5 & 69.8 & 53.2 & 43.4 & 66.8 & 46.9 \\
        % FocalNet-B (SRF) & 109.4 & 496 & 48.8 & 70.7 & 53.5 & 43.3 & 67.5 & 46.5 & 49.6 & 70.6 & 54.1 & 44.1 & 68.0 & 47.2 \\
        FocalAtt-B & 110.0 & 533 & 47.8 & 70.2 & 52.5 & 43.2 & 67.3 & 46.5 & 49.0 & 70.1 & 53.6 & 43.7 & 67.6 & 47.0 \\ 
        FocalNet$_{\textrm{(LRF)}}$-B & 111.4 & 507 & 49.0 & 70.9 & 53.9 & 43.5 & 67.9 & 46.7 & 49.8 & 70.9 & 54.6 & 44.1 & 68.2 & 47.2 \\
        \rowcolor{Gray}
        {\model}-B & 112.1 & 506 & \bf{49.3} & \bf{71.8} & \bf{54.0} & \bf{43.9} & \bf{68.2} & \bf{47.2} & \bf{50.1} & \bf{71.5} & \bf{55.1} & \bf{44.5} & \bf{68.7} & \bf{47.8} \\
        \bottomrule
        \end{tabular}
    \end{adjustbox}
\end{spacing}
\vspace{-3.0mm}
\end{table}

%%%%%%%%%%%%%%%%%%%%%%%%%%%%%%%%%%%%%%%%%%%%%%%%%%%%%%%%%%%%

\vspace{-1.8mm}
\subsubsection{Object Detection on COCO} \label{sec:exp:cv:det}
\vspace{-0.7mm}

\textbf{Setups.} This experiment benchmarks the object detection and instance segmentation performance on COCO 2017~\citep{lin2014microsoft}. All models are trained on 118K training images and evaluated on 5K validation images. Two standard training schedules, \emph{i.e.}, the 1{\texttimes} schedule with 12 epochs and the 3{\texttimes} schedule with 36 epochs, are used for benchmarking. Detailed setups are stated in Appendix~\ref{supp:sec:setup:det}.

\textbf{Results.} In Tab.~\ref{tab:coco}, {\model} performs comparably to FocalNet$_{\textrm{(LRF)}}$ on the tiny and small model scales. We can observe the superiority of {\model}-B over FocalNet$_{\textrm{(LRF)}}$-B on the base model scale (better performance on all 12 metrics). The base-scale {\model}-B owns $[2,2,18,2]$ modeling blocks (more than {\model}-T) and $[128,256,512,1024]$ feature channels (more than {\model}-S) for four modeling stages. Therefore, \emph{larger message passing hops} (achieved by more modeling blocks) coupled with larger model width favor {\model}'s performance on high-resolution dense prediction tasks. 

% \textbf{Results.} In Tab.~\ref{tab:coco}, {\model}-T performs comparably to FocalNet$_{\textrm{(LRF)}}$-T on the tiny scale. We can observe the superiority of {\model}-S over FocalNet$_{\textrm{(LRF)}}$-S (equal or better performance on 11 out of 12 metrics) and the superiority of {\model}-B over FocalNet$_{\textrm{(LRF)}}$-B (better performance on all 12 metrics). Compared to the tiny-scale model with 2, 2, 6 and 2 blocks in four stages, the small- and base-scale models own 2, 2, 18 and 2 blocks per stage. Therefore, more modeling blocks and thus \emph{larger message passing hops} favor {\model}s' performance on high-resolution dense prediction tasks. 

%%%%%%%%%%%%%%%%%%%%%%%%%%%%%%%%%%%%%%%%%%%%%%%%%%%%%%%%%%%%

%%% ADE20K %%%%%%%%%%%%%%%
\begin{wraptable}{r}{0.42\textwidth}
\vspace{-15.2mm}
\caption{ADE20K semantic segmentation results with UperNet~\citep{xiao2018unified}.} 
\label{tab:ade20k}
% \vspace{0.4mm}
\begin{adjustbox}{max width=1.0\linewidth}
\begin{tabular}{l|cc|cc}
\toprule  
\bf{Model} & \bf{\#Params. (M)} & \bf{FLOPs (G)} & \bf{mIoU} & \bf{+MS}  \\
\midrule \midrule
Swin-T & 60 & 941 & 44.5 & 45.8 \\
% FocalNet-T (SRF) & 512 & 61 & 944 & 46.5 & 47.2 \\
FocalAtt-T & 62 & 998 & 45.8 & 47.0 \\
ConvNeXt-T & 60 & 939 & - & 46.7 \\
FocalNet$_{\textrm{(LRF)}}$-T & 61 & 949 & 46.8 & 47.8 \\
\rowcolor{Gray}
{\model}-T & 62 & 948 & \bf{47.2} & \bf{48.4} \\
\midrule
Swin-S & 81 & 1038 & 47.6 & 49.5 \\
% FocalNet-S (SRF) & 512 & 83 & 1035 & 49.3 & 50.1 \\
FocalAtt-S & 85 & 1130 & 48.0 & 50.0 \\
ConvNeXt-S & 82 & 1027 & - & 49.6 \\
FocalNet$_{\textrm{(LRF)}}$-S & 84 & 1044 & 49.1 & 50.1 \\
\rowcolor{Gray}
{\model}-S & 85 & 1042 & \bf{49.8} & \bf{50.8} \\
\midrule
Swin-B & 121 & 1188 & 48.1 & 49.7 \\
% FocalNet-B (SRF) & 512 & 124 & 1180 & 50.2 & 51.1 \\
FocalAtt-B & 126 & 1354 & 49.0 & 50.5 \\
ConvNeXt-B & 122 & 1170 & - & 49.9 \\ 
FocalNet$_{\textrm{(LRF)}}$-B & 126 & 1192 & 50.5 & 51.4 \\
\rowcolor{Gray}
{\model}-B & 126 & 1190 & \bf{50.7} & \bf{51.8} \\
\bottomrule
\end{tabular}
\end{adjustbox}
\vspace{-6.7mm}
\end{wraptable} 

%%%%%%%%%%%%%%%%%%%%%%%%%%%%%%%%%%%%%%%%%%%%%%%%%%%%%%%%%%%%

\vspace{-1.7mm}
\subsubsection{Semantic Segmentation on ADE20K} \label{sec:exp:cv:seg}
\vspace{-0.6mm}

\textbf{Setups.} In this experiment, we benchmark the semantic segmentation performance of different backbones on ADE20K~\citep{zhou2017scene} which contains 20K training, 2K validation and 3K test images. The mIoU metrics under both single- and multi-scale (MS) evaluation are reported. We provide more details in Appendix~\ref{supp:sec:setup:seg}.

\textbf{Results.} Tab.~\ref{tab:ade20k} reports all results. It can be observed that {\model}-T, {\model}-S and {\model}-B achieve the best performance on their corresponding model scales under both evaluation metrics. Such consistent performance gains verify the effectiveness of {\model} on the dense prediction tasks that require to model fine-grained semantics and long-range interactions. 

%%%%%%%%%%%%%%%%%%%%%%%%%%%%%%%%%%%%%%%%%%%%%%%%%%%%%%%%%%%%

\vspace{-1.7mm}
\subsection{Performance Comparison on Protein Structure Modeling} \label{sec:exp:protein}
\vspace{-0.6mm}

%%%%%%%%%%%%%%%%%%%%%%%%%%%%%%%%%%%%%%%%%%%%%%%%%%%%%%%%%%%%

%%% EC&GO %%%%%%%%%%%%%%%
\begin{wraptable}{r}{0.42\textwidth}
\vspace{-14.7mm}
\begin{spacing}{1.03}
\caption{$\mathrm{F}_{\mathrm{max}}$ results on EC and GO protein function prediction benchmarks.} 
\label{tab:ec_go}
% \vspace{0.4mm}
\begin{adjustbox}{max width=1.0\linewidth}
\begin{tabular}{l|cccc}
\toprule  
\bf{Model} & \bf{EC} & \bf{GO-BP} & \bf{GO-MF} & \bf{GO-CC}  \\
\midrule \midrule
3DCNN\_MQA & 0.077 & 0.240 & 0.147 & 0.305 \\
GCN & 0.320 & 0.252 & 0.195 & 0.329 \\
GAT & 0.368 & 0.284 & 0.317 & 0.385 \\
GVP & 0.489 & 0.326 & 0.426 & 0.420 \\
GraphQA & 0.509 & 0.308 & 0.329 & 0.413 \\
New IEConv & 0.735 & 0.374 & 0.544 & \bf{0.444} \\
\midrule
GearNet & 0.730 & 0.356 & 0.503 & 0.414 \\
\rowcolor{Gray}
{\model} & \bf{0.768} & \bf{0.437} & \bf{0.563} & \bf{0.421} \\ 
\midrule
GearNet-Edge & 0.810 & 0.403 & 0.580 & 0.450 \\
\rowcolor{Gray}
{\model}-Edge & \bf{0.829} & \bf{0.456} & \bf{0.592} & \bf{0.453} \\ 
\bottomrule
\end{tabular}
\end{adjustbox}
\end{spacing}
\vspace{-6.4mm}
\end{wraptable} 

%%%%%%%%%%%%%%%%%%%%%%%%%%%%%%%%%%%%%%%%%%%%%%%%%%%%%%%%%%%%

\subsubsection{Baseline Methods} \label{sec:exp:protein:baseline}
\vspace{-1mm}

We compare with the SoTA GearNet \citep{zhang2017mixup} under two settings, \emph{i.e.}, with and without edge message passing (``-Edge'' in Tab.~\ref{tab:ec_go}). We also include other baselines, \emph{i.e.}, 3DCNN\_MQA~\citep{derevyanko2018deep}, GCN~\citep{kipf2016semi}, GAT~\citep{velivckovic2017graph}, GVP~\citep{jing2021equivariant}, GraphQA~\citep{baldassarre2021graphqa} and New IEConv~\citep{hermosilla2022contrastive}, for complete comparisons.

%%%%%%%%%%%%%%%%%%%%%%%%%%%%%%%%%%%%%%%%%%%%%%%%%%%%%%%%%%%%

\begin{figure}[t]
\centering
    \includegraphics[width=0.97\linewidth]{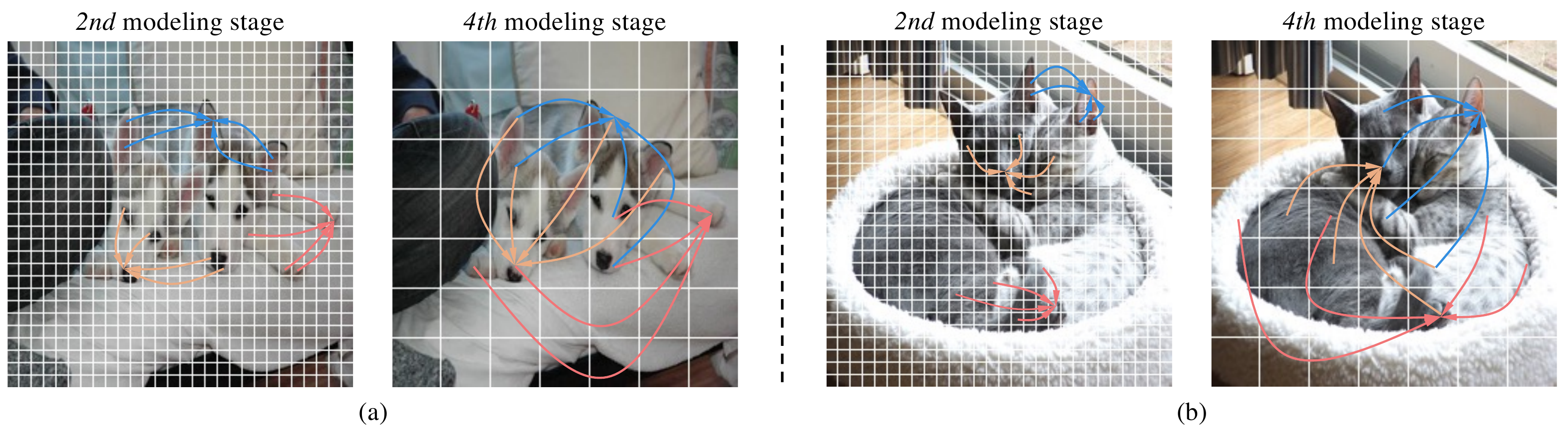}
    \vspace{-4.0mm}
    \caption{\small Medium-range edges built by {\model}-T (we use different colors for different selected target nodes).   
    % (each target node owns a color).
    }
    \label{fig:visualization}
    \vspace{-2.8mm}
\end{figure}

%%%%%%%%%%%%%%%%%%%%%%%%%%%%%%%%%%%%%%%%%%%%%%%%%%%%%%%%%%%%

\vspace{-1.7mm}
\subsubsection{Protein Function Prediction} \label{sec:exp:protein:func}
\vspace{-0.6mm}

\textbf{Setups.} This set of experiments compare different protein structure encoders on the EC~\citep{gligorijevic2021structure} and GO~\citep{gligorijevic2021structure} protein function prediction benchmarks. We follow GearNet to report the protein-centric maximum F-score $\mathrm{F}_{\mathrm{max}}$, a commonly-used metric in CAFA challenges~\citep{radivojac2013large}. More dataset, model and training details are in Appendix~\ref{supp:sec:setup:protein}.

\textbf{Results.} In Tab.~\ref{tab:ec_go}, we can observe that {\model} consistently outperforms GearNet on all four tasks, and the performance gains preserve after involving edge message passing (details of edge message passing are stated in Appendix~\ref{supp:sec:setup:protein}). Since {\model} follows the single-stage model architecture of GearNet, we can conclude the effectiveness of \emph{medium- and long-range interaction modeling} and \emph{GRMP-based multi-relational modeling}, which are novel modeling mechanisms in {\model}.  

%%%%%%%%%%%%%%%%%%%%%%%%%%%%%%%%%%%%%%%%%%%%%%%%%%%%%%%%%%%%

%%% edge ablation %%%%%%%%%%%%%%%
\begin{wraptable}{r}{0.36\textwidth}
\begin{spacing}{0.972}
\vspace{-14.6mm}
\small
\caption{\small Ablation study of multi-range edges on ImageNet-1K with {\model}-T.}
\label{tab:ablation:edge}
\vspace{-1.8mm}
\center
\begin{adjustbox}{max width=1.0\linewidth}
\begin{tabular}{ccc|c}
\toprule
\bf{short} & \bf{medium} & \bf{long} & \bf{Top-1 Acc (\%)} \\ 
\midrule
\checkmark & & & 80.7 \\
& \checkmark & & 79.3 \\
& & \checkmark & \bf{81.7} \\
\midrule
\checkmark & \checkmark &  & 81.5 \\
\checkmark & & \checkmark & \bf{82.0} \\
& \checkmark & \checkmark & \bf{82.0} \\
\midrule
\checkmark & \checkmark & \checkmark & \bf{82.3} \\
\bottomrule
\end{tabular}
\end{adjustbox}
\end{spacing}
\vspace{-2mm}
\end{wraptable} 

%%%%%%%%%%%%%%%%%%%%%%%%%%%%%%%%%%%%%%%%%%%%%%%%%%%%%%%%%%%%

\vspace{-1.85mm}
\subsection{Ablation Study} \label{sec:exp:ablation}
\vspace{-1.03mm}

\textbf{Effect of multi-range relational edges.} In Tab.~\ref{tab:ablation:edge}, we evaluate {\model}-T on ImageNet-1K with different ranges of edges. When using a single range, the model with long-range edges achieves the highest accuracy 81.7\%, which verifies the importance of capturing long-range interactions in image classification. By further adding short- or medium-range edges, the performance is promoted to 82.0\%, where more fine-grained local interactions are captured. By using all three ranges of edges, the full model of {\model}-T obtains the 82.3\% accuracy, which proves the complementarity of short-, medium- and long-range edges. Ablation study for protein structure is in Appendix~\ref{supp:sec:ablation:edge}.

%%%%%%%%%%%%%%%%%%%%%%%%%%%%%%%%%%%%%%%%%%%%%%%%%%%%%%%%%%%%

%%% layer ablation %%%%%%%%%%%%%%%
\begin{wraptable}{r}{0.55\textwidth}
\vspace{-8.2mm}
\begin{spacing}{1.04}
\caption{\small Ablation study of multi-relational modeling layer on ImageNet-1K with {\model}-T.}
\label{tab:ablation:layer}
% \vspace{0.4mm}
\begin{adjustbox}{max width=1.0\linewidth}
\begin{tabular}{lc|ccc|c}
\toprule
\multirow{2}{*}{\bf{Layer}} & \bf{Hidden} & \bf{\#Params.} & \bf{FLOPs} & \bf{Throughput} & \bf{Top-1}  \\
& \bf{Dimensions} & \bf{(M)} & \bf{(G)} & \bf{(imgs/s)} & \bf{Acc (\%)} \\
\midrule
RGConv & $[84,168,336,672]$ & 28.8 & 4.6 & 541.8 & 81.5 \\
\rowcolor{Gray}
GRMP & $[96,192,384,768]$ & 28.8 & 4.6 & 530.3 & \bf{82.3} \\
RGConv & $[96,192,384,768]$ & 37.3 & 5.9 & 451.2 & 82.2 \\
\rowcolor{Gray}
GRMP & $[108,216,432,864]$ & 36.3 & 5.8 & 444.5 & \bf{82.7} \\
\bottomrule
\end{tabular}
\end{adjustbox}
\end{spacing}
\vspace{-5mm}
\end{wraptable} 

%%%%%%%%%%%%%%%%%%%%%%%%%%%%%%%%%%%%%%%%%%%%%%%%%%%%%%%%%%%%

\textbf{Effect of GRMP layer.} In Tab.~\ref{tab:ablation:layer}, we compare between RGConv and GRMP under the comparable parameter number, FLOPs and throughput. (1) GRMP's dimensions are first set as $[96,192,384,768]$ in four stages. To reach comparable cost, RGConv can only have the dimensions of $[84,168,336,672]$ and achieves a lower accuracy 81.5\% than GRMP's 82.3\%. (2) After increasing RGConv's dimensions to $[96,192,384,768]$, it aligns GRMP's performance while introduces more cost (1.3G more FLOPs). Under comparable cost, GRMP can have $[108,216,432,864]$ dimensions, leading to a higher accuracy 82.7\%. These results demonstrate the better efficiency-performance trade-off gained by GRMP. Ablation study for protein structure modeling is in Appendix~\ref{supp:sec:ablation:layer}.

%%%%%%%%%%%%%%%%%%%%%%%%%%%%%%%%%%%%%%%%%%%%%%%%%%%%%%%%%%%%

\vspace{-1.85mm}
\subsection{Visualization} \label{sec:exp:visualization}
\vspace{-1.08mm}

Fig.~\ref{fig:visualization} displays some medium-range edges built by the {\model}-T trained on ImageNet-1K. The edges for the 2nd stage connect the patches with similar low-level features (\emph{e.g.}, the patches of red dog ears in Fig.~\ref{fig:visualization}(a)), while the edges for the 4th stage connect semantically relevant patches (\emph{e.g.}, different body parts of two dogs in Fig.~\ref{fig:visualization}(a)), which shows {\model}-T's hierarchical image modeling ability.  

%%%%%%%%%%%%%%%%%%%%%%%%%%%%%%%%%%%%%%%%%%%%%%%%%%%%%%%%%%%%
%%%%%%%%%%%%%%%%%%%%%%%%%%%%%%%%%%%%%%%%%%%%%%%%%%%%%%%%%%%%

\vspace{-1.6mm}
\section{Conclusions and Future Work} \label{sec:conclusion}
\vspace{-1.3mm}

This work proposes the {\model} to model spatial multi-relational data like image patches and protein alpha carbons. It builds relational edges on multiple spatial ranges to describe the interactions in the data. It uses the gated relational message passing layer to model the built multi-relational graph, which can efficiently adapt to large data and model scales. The instantiations of {\model} have gained superior performance on various image and protein structure modeling tasks. 

\vspace{-0.3mm}
In future works, we will adapt {\model} to more tasks of other domains like 3D point cloud modeling for object and scene understanding, and we will explore a general hierarchical multi-relational modeling method for the data from various domains. 

%%%%%%%%%%%%%%%%%%%%%%%%%%%%%%%%%%%%%%%%%%%%%%%%%%%%%%%%%%%%

%%%%%%%%%%%%%%%%%%%%%%%%%%%%%%%%%%%%%%%%%%%%%%%%%%%%%%%%%%%%

\newpage
\bibliography{reference}
\bibliographystyle{iclr2023_conference}

%%%%%%%%%%%%%%%%%%%%%%%%%%%%%%%%%%%%%%%%%%%%%%%%%%%%%%%%%%%%

\newpage
\appendix
%%%%%%%%%%%%%%%%%%%%%%%%%%%%%%%%%%%%%%%%%%%%%%%%%%%%%%%%%%%%

\section{FLOPs of RGConv and GRMP} \label{supp:sec:flops}

For FLOPs computation, we consider the multi-relational graph $\mathcal{G} = (\mathcal{V}, \mathcal{E}, \mathcal{R})$ with node set $\mathcal{V}$, edge set $\mathcal{E}$ and relation (\emph{i.e.}, edge type) set $\mathcal{R}$, and both input and output node features are with $C$ feature channels. In addition, we assume that, when introducing a new relation, the in-degree of each node will increase by $\bar{d}$ on average. 

\begin{myprop} \label{prop:rgconv_flops}
To process the assumed multi-relational graph, the Relational Graph Convolution (RGConv) consumes the FLOPs as below under the efficient implementation with sparse matrix multiplication:
\begin{displaymath}
\mathrm{FLOPs}(\mathrm{RGConv}) = |\mathcal{R}| \cdot (2 \bar{d} |\mathcal{V}| C + 2 |\mathcal{V}| C^2) + 2 |\mathcal{V}| C^2 + |\mathcal{V}| C .
\end{displaymath}
\end{myprop}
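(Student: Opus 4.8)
The plan is to decompose one RGConv layer into elementary dense and sparse linear-algebra primitives, tally the FLOPs of each under a fixed accounting convention, and add them. I would fix the conventions up front: a dense matrix--vector product with a $C \times C$ matrix costs $2C^2$ FLOPs (each of the $C$ outputs is a length-$C$ dot product, i.e.\ $C$ multiplications and $C-1$ additions), while a sparse-matrix--times--dense-feature product costs $2$ FLOPs per (nonzero, channel) pair, counting the accumulating add. Since each relation adds $\bar{d}$ to every node's in-degree on average, the normalized adjacency matrix $A_r$ of relation $r$ has $\bar{d}|\mathcal{V}|$ nonzeros, and I would fold the normalization factors $1/|\mathcal{N}_r(v)|$ into the precomputed entries of $A_r$ so that they incur no runtime cost.

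Using that $W_r$ does not depend on $u$, I would write the relational term as $\sum_{r} A_r (Z W_r)$ with the stacked feature matrix $Z \in \mathbb{R}^{|\mathcal{V}| \times C}$, and adopt the \emph{transform-first} ordering: for each $r$, first form $Z W_r$ (a dense transform costing $2|\mathcal{V}|C^2$), then apply the sparse $A_r$ while accumulating into the shared buffer $z^{\mathrm{aggr}}$ (costing $2\bar{d}|\mathcal{V}|C$). This gives $2\bar{d}|\mathcal{V}|C + 2|\mathcal{V}|C^2$ per relation and hence $|\mathcal{R}| \cdot (2\bar{d}|\mathcal{V}|C + 2|\mathcal{V}|C^2)$ after summing over $\mathcal{R}$. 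The self-update $W^{\mathrm{self}} z_v$ is one further $C \times C$ matrix--vector product per node, contributing $2|\mathcal{V}|C^2$, and the residual combination $z'_v = W^{\mathrm{self}} z_v + z^{\mathrm{aggr}}_v$ is one length-$C$ vector addition per node, contributing $|\mathcal{V}|C$. Summing the four pieces reproduces the claimed expression.

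The step I expect to be delicate is the cross-relation summation $\sum_{r}$: a naive count would introduce a separate $(|\mathcal{R}|-1)|\mathcal{V}|C$ term for combining the per-relation aggregates, which is absent from the target formula. The resolution, and the reason the transform-first ordering matters, is that each relation's sparse application writes directly into the shared accumulator $z^{\mathrm{aggr}}$, so the accumulating adds already folded into the $2\bar{d}|\mathcal{V}|C$ sparse cost simultaneously sum over neighbors \emph{and} over relations; no extra term is needed. I would therefore argue carefully that the only genuinely standalone addition is the single residual $z^{\mathrm{aggr}}_v$ term (the lone $|\mathcal{V}|C$ contribution), and close by checking that no primitive has been double-counted and that the leading $|\mathcal{V}|C^2$ and $|\mathcal{V}|C$ bookkeeping matches the stated formula exactly.
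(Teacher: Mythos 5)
Your count is correct and every term matches, but your scheduling of the computation differs from the proof the paper gives in Appendix~\ref{supp:sec:flops}. You adopt a \emph{transform-first} ordering: for each relation $r$ you form the dense product $Z W_r$ at cost $2|\mathcal{V}|C^2$, then apply the sparse normalized adjacency $A_r$ (with $\bar{d}|\mathcal{V}|$ nonzeros) while accumulating into a shared buffer, so the cross-relation sum is absorbed into the $2\bar{d}|\mathcal{R}||\mathcal{V}|C$ of sparse multiply-adds. The paper instead \emph{aggregates first}: it stacks all relations into one block adjacency $A \in \mathbb{R}^{|\mathcal{V}| \times |\mathcal{R}||\mathcal{V}|}$ (normalization folded into its entries, as in your setup), computes $\tilde{Z} = A^\top Z$ in a single sparse matmul costing $2|\mathcal{E}|C = 2\bar{d}|\mathcal{R}||\mathcal{V}|C$ to fill $|\mathcal{R}|$ relational slots per node, reshapes, and then applies the concatenated kernel $W^{\mathrm{conv}} \in \mathbb{R}^{|\mathcal{R}|C \times C}$ in one dense multiply costing $2|\mathcal{R}||\mathcal{V}|C^2$ --- there the cross-relation sum lives inside the length-$|\mathcal{R}|C$ dot products of the dense step rather than inside the sparse accumulation. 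Both schedules tally to identical totals (self-update $2|\mathcal{V}|C^2$ and residual add $|\mathcal{V}|C$ are counted the same way in each), so the worry you flagged about a stray $(|\mathcal{R}|-1)|\mathcal{V}|C$ term is genuine and your resolution of it is sound; notably, the paper's companion count for GRMP (Proposition~\ref{prop:grmp_flops}) \emph{does} carry exactly such a $(|\mathcal{R}|-1)|\mathcal{V}|C$ term in its inter-relation step, because the channel-wise kernels there cannot fold the relation sum into a dense transform --- which confirms that this folding is the load-bearing point of the RGConv count, in whichever of the two places one chooses to hide it. Practically, your ordering buys a smaller memory footprint (no $|\mathcal{R}||\mathcal{V}| \times C$ intermediate) at the price of $|\mathcal{R}|$ separate sparse products, while the paper's ordering matches the single-sparse-matmul implementation it describes; FLOP-wise the two are interchangeable.
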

\begin{proof}
We divide the computation of RGConv into three steps and compute the FLOPs of each step:
\begin{enumerate}[label=\protect\circled{\arabic*}]
\item In the first step, the adjacency of all node pairs on $|\mathcal{R}|$ different relations are summarized in the adjacency matrix $A \in \mathbb{R}^{|\mathcal{V}| \times |\mathcal{R}||\mathcal{V}|}$, where the element $A_{i, (j-1)|\mathcal{R}| + k}$ indicates the weight of the edge from the $i$-th node to the $j$-th node with the $k$-th relation:
\begin{equation} \label{supp:eq:adjacency}
A_{i, (j-1)|\mathcal{R}| + k} = 
    \begin{cases}
      \frac{1}{|\mathcal{N}_{r_k}(v_j)|} & \text{there is an edge from $i$-th node to $j$-th node with $k$-th relation,}\\
      0 & \text{otherwise,}
    \end{cases}  
\end{equation}
where $\mathcal{N}_{r_k}(v_j) = \{ u | (u, v_j, r_k) \in \mathcal{E} \}$ is the neighborhood set of node $v_j$ with relation $r_k$. Using this adjacency matrix, each node will have $|\mathcal{R}|$ different slots to receive the relational messages passed to it. All relational message passing operations can be realized by a sparse matrix multiplication:
\begin{equation} \label{supp:eq:rgconv_step1}
\tilde{Z} = A^\top Z ,
\end{equation}
where $Z \in \mathbb{R}^{|\mathcal{V}| \times C}$ denotes input node features, and $\tilde{Z} \in \mathbb{R}^{|\mathcal{R}||\mathcal{V}| \times C}$ denotes the relational slots of all nodes after message passing. By utilizing the sparsity of the adjacency matrix, this step consumes following FLOPs:
\begin{equation} \label{supp:eq:rgconv_step1_flops}
\mathrm{FLOPs}(\mathrm{RGConv}\!-\!\mathrm{{\small \circled{1}}}) = 2 |\mathcal{E}| C = 2 \bar{d} |\mathcal{R}| |\mathcal{V}| C .
\end{equation}

\item In the second step, we first integrate the relational slots of each node to get the reshaped $\tilde{Z} \in \mathbb{R}^{|\mathcal{V}| \times |\mathcal{R}|C}$. At this time, each node is represented by a $|\mathcal{R}|C$-dimensional vector, \emph{i.e.}, the aggregated messages of all relations. Next, we concatenate the convolutional kernel matrices of all relations to produce $W^{\mathrm{conv}} \in \mathbb{R}^{|\mathcal{R}|C \times C}$, and this matrix is applied upon $\tilde{Z}$ to combine the messages in the same relational slot and aggregate messages across different relations:
\begin{equation} \label{supp:eq:rgconv_step2}
Z^{\mathrm{aggr}} = \tilde{Z} W^{\mathrm{conv}} ,
\end{equation}
where $Z^{\mathrm{aggr}} \in \mathbb{R}^{|\mathcal{V}| \times C}$ denotes the aggregated neighborhood information for each node. This step has the FLOPs as below:
\begin{equation} \label{supp:eq:rgconv_step2_flops}
\mathrm{FLOPs}(\mathrm{RGConv}\!-\!\mathrm{{\small \circled{2}}}) = 2 |\mathcal{R}| |\mathcal{V}| C^2 .
\end{equation}

\item In the final step, a self-update with matrix $W^{\mathrm{self}} \in \mathbb{R}^{C \times C}$ is first performed on the input feature of each node, and the self-updated node feature is further added with the aggregated neighborhood information:
\begin{equation} \label{supp:eq:rgconv_step3}
Z' = Z W^{\mathrm{self}} + Z^{\mathrm{aggr}} ,
\end{equation}
where $Z' \in \mathbb{R}^{|\mathcal{V}| \times C}$ denotes output node features. This step has the FLOPs as below:
\begin{equation} \label{supp:eq:rgconv_step3_flops}
\mathrm{FLOPs}(\mathrm{RGConv}\!-\!\mathrm{{\small \circled{3}}}) = 2 |\mathcal{V}| C^2 + |\mathcal{V}| C .
\end{equation}
\end{enumerate}

Therefore, by summing up the computational cost of three steps, the RGConv consumes the following FLOPs in total:
\begin{displaymath}
\mathrm{FLOPs}(\mathrm{RGConv}) = |\mathcal{R}| \cdot (2 \bar{d} |\mathcal{V}| C + 2 |\mathcal{V}| C^2) + 2 |\mathcal{V}| C^2 + |\mathcal{V}| C .
\end{displaymath} 

\end{proof}

\begin{myprop} \label{prop:grmp_flops}
To process the assumed multi-relational graph, the Gated Relational Message Passing (GRMP) consumes the FLOPs as below under the efficient implementation with sparse matrix multiplication:
\begin{displaymath}
\mathrm{FLOPs}(\mathrm{GRMP}) = |\mathcal{R}| \cdot (2 \bar{d} + 7) |\mathcal{V}| C  + 6|\mathcal{V}| C^2 .
\end{displaymath}
\end{myprop}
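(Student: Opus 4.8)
The plan is to mirror the three-step decomposition used for Proposition~\ref{prop:rgconv_flops}, but to track the GRMP update of Eq.~\eqref{eq:grmp} through its five constituent operations \circled{1}--\circled{5} and sum the per-step FLOPs. The organizing principle is to sort the operations into three groups: the dense channel-mixing matmuls, which carry a $C^2$ factor but no $|\mathcal{R}|$ factor; the single sparse relational aggregation, which carries the $\bar{d}|\mathcal{R}|$ factor exactly as in RGConv; and the remaining channel-wise and low-rank operations, which carry an $|\mathcal{R}|$ factor but only one power of $C$. The claimed formula is then read off by collecting the three groups into the $6|\mathcal{V}|C^2$, the $2\bar{d}|\mathcal{R}||\mathcal{V}|C$, and the residual $7|\mathcal{R}||\mathcal{V}|C$ contributions.

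First I would handle the three dense matmuls. Steps \circled{1} and \circled{4} apply the $C\times C$ matrices $W^{\mathrm{in}}$ and $W^{\mathrm{out}}$ node-wise, and the self-branch of step \circled{5} applies $W^{\mathrm{self}}$; each is a multiplication of a $|\mathcal{V}|\times C$ feature matrix by a $C\times C$ matrix and hence costs $2|\mathcal{V}|C^2$ under the standard multiply--accumulate convention. Summing the three gives the $6|\mathcal{V}|C^2$ term, and I would stress that none of these scales with $|\mathcal{R}|$ --- this is precisely where GRMP departs from RGConv, whose per-relation kernels $W_r$ force the $|\mathcal{R}|\cdot 2|\mathcal{V}|C^2$ cost visible in Eq.~\eqref{eq:rgconv_flops}.

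Next I would count the relation-dependent operations, all at most linear in $C$. The intra-relation aggregation in step \circled{2} is realised, just as in step \circled{1} of Proposition~\ref{prop:rgconv_flops}, by a single sparse multiplication against the normalized adjacency matrix over all $|\mathcal{E}| = \bar{d}|\mathcal{R}||\mathcal{V}|$ edges, contributing $2\bar{d}|\mathcal{R}||\mathcal{V}|C$. The remaining pieces --- the per-relation channel-wise kernel $w_r \odot (\cdot)$ applied to each of the $|\mathcal{R}||\mathcal{V}|$ relational slots, the attention weights $\alpha(v)=W^{\alpha}z_v$ with $W^{\alpha}\in\mathbb{R}^{|\mathcal{R}|\times C}$, and the node-adaptive weighted summation of step \circled{3} --- are each $O(|\mathcal{R}||\mathcal{V}|C)$; tallying their coefficients, while folding in the normalization and the comparatively negligible $|\mathcal{V}|C$ Hadamard gate of step \circled{5}, yields the residual $7|\mathcal{R}||\mathcal{V}|C$. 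Adding the three groups then gives the claimed expression.

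The main obstacle is the bookkeeping in this last group rather than any conceptual difficulty: I must avoid double-counting the normalization $1/|\mathcal{N}_r(v)|$ (which I would fold into the sparse-matmul weights exactly as in Eq.~\eqref{supp:eq:adjacency}), account separately for the scaling and the reduction inside the weighted summation, and --- most importantly --- verify that the attention computation $W^{\alpha}z_v$ contributes only $O(|\mathcal{R}||\mathcal{V}|C)$ rather than $O(|\mathcal{V}|C^2)$, since $W^{\alpha}$ has only $|\mathcal{R}|$ rows. Getting these small coefficients to sum to exactly $7$ is the delicate point; the qualitatively important conclusion, which the decomposition makes transparent, is that \emph{every} $C^2$ contribution is relation-independent, so that enlarging $|\mathcal{R}|$ grows the cost only through the gentle $(2\bar{d}+7)|\mathcal{V}|C$ factor.
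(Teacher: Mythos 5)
Your proposal is correct and takes essentially the same route as the paper's proof: both track Eq.~\eqref{eq:grmp} through its five operations under the sparse-matmul implementation with the degree-normalized adjacency of Eq.~\eqref{supp:eq:adjacency}, charge each node-wise $C \times C$ matmul ($W^{\mathrm{in}}$, $W^{\mathrm{out}}$, $W^{\mathrm{self}}$) at $2|\mathcal{V}|C^2$, and charge the single sparse aggregation at $2|\mathcal{E}|C = 2\bar{d}|\mathcal{R}||\mathcal{V}|C$. The coefficient tally you defer does close exactly as you predict: the paper counts the broadcast-plus-Hadamard kernel application in step~{\small \circled{2}}, the $W^{\alpha}$ matmul, and the scaling inside the weighted sum at $2|\mathcal{R}||\mathcal{V}|C$ each, and the reduction over relations, $(|\mathcal{R}|-1)|\mathcal{V}|C$, combines with the gate's $|\mathcal{V}|C$ in step~{\small \circled{5}} to supply the remaining $|\mathcal{R}||\mathcal{V}|C$, yielding the coefficient $7$.
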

\begin{proof}
Following the steps of GRMP stated in Eq.~\eqref{eq:grmp}, we compute the FLOPs of each step:
\begin{enumerate}[label=\protect\circled{\arabic*}]
\item In the first step, we conduct a pre-layer node-wise channel aggregation with the weight matrix $W^{\mathrm{in}} \in \mathbb{R}^{C \times C}$:
\begin{equation} \label{supp:eq:grmp_step1}
Z^{\mathrm{in}} = Z W^{\mathrm{in}} , 
\end{equation}
where $Z \in \mathbb{R}^{|\mathcal{V}| \times C}$ denotes the input node features, and $Z^{\mathrm{in}} \in \mathbb{R}^{|\mathcal{V}| \times C}$ denotes the channel-aggregated node features. This step has the FLOPs consumption as below:
\begin{equation} \label{supp:eq:grmp_step1_flops}
\mathrm{FLOPs}(\mathrm{GRMP}\!-\!\mathrm{{\small \circled{1}}}) = 2 |\mathcal{V}| C^2 .
\end{equation}

\item In the second step, we first gather the messages within the same relation for each node, which is realized by the sparse matrix multiplication between $Z^{\mathrm{in}}$ and the adjacency matrix $A \in \mathbb{R}^{|\mathcal{V}| \times |\mathcal{R}||\mathcal{V}|}$ ($A$ is identically defined as in the step {\small \circled{1}} of Proposition~\ref{prop:rgconv_flops}):
\begin{equation} \label{supp:eq:grmp_step2_aggr}
\tilde{Z}^{\mathrm{in}} = A^\top Z^{\mathrm{in}} ,
\end{equation}
where $\tilde{Z}^{\mathrm{in}} \in \mathbb{R}^{|\mathcal{R}||\mathcal{V}| \times C}$ represents the relational slots of all nodes after message passing. The relational slots of each node are then integrated to get the reshaped $\tilde{Z}^{\mathrm{in}} \in \mathbb{R}^{|\mathcal{V}| \times |\mathcal{R}|C}$. By concatenating the convolutional kernel vectors of all relations, we have $w_{\mathrm{conv}} \in \mathbb{R}^{|\mathcal{R}|C \times 1}$, and this vector is broadcast to all nodes to perform channel-wise message aggregation via Hadamard product:
\begin{equation} \label{supp:eq:grmp_step2_conv}
\tilde{Z}^{\mathrm{aggr}} = (\mathbf{1}_{\mathrm{conv}} w_\mathrm{conv}^\top) \odot \tilde{Z}^{\mathrm{in}} ,
\end{equation}
where $\mathbf{1}_{\mathrm{conv}} \in \mathbb{R}^{|\mathcal{V}| \times 1}$ is the all-one vector for broadcasting, and $\tilde{Z}^{\mathrm{aggr}} \in \mathbb{R}^{|\mathcal{V}| \times |\mathcal{R}|C}$ denotes the relational slots of all nodes after intra-relation message aggregation.  

To conduct the operations in Eqs.~\eqref{supp:eq:grmp_step2_aggr} and \eqref{supp:eq:grmp_step2_conv}, this step consumes the following FLOPs:
\begin{equation} \label{supp:eq:grmp_step2_flops}
\mathrm{FLOPs}(\mathrm{GRMP}\!-\!\mathrm{{\small \circled{2}}}) = 2 |\mathcal{E}| C + 2 |\mathcal{R}| |\mathcal{V}| C =  2 \bar{d} |\mathcal{R}| |\mathcal{V}| C + 2 |\mathcal{R}| |\mathcal{V}| C .  
\end{equation}

\item In the third step, we first compute the attentive weights assigned to all relations on each node:
\begin{equation} \label{supp:eq:grmp_step3_weight}
M^{\alpha} = Z W^{\alpha} ,
\end{equation}
where $W^{\alpha} \in \mathbb{R}^{C \times |\mathcal{R}|}$ is the weight matrix for node-adaptive relation weighting, and $M^{\alpha} \in \mathbb{R}^{|\mathcal{V}| \times |\mathcal{R}|}$ denotes the relation weights on all nodes. After that, a weighted summation is performed to aggregate the messages of different relations in $\tilde{Z}^{\mathrm{aggr}}$ (in this operation, we use the reshaped $\tilde{Z}^{\mathrm{aggr}} \in \mathbb{R}^{|\mathcal{V}| \times |\mathcal{R}| \times C}$ and the reshaped $M^{\alpha} \in \mathbb{R}^{|\mathcal{V}| \times |\mathcal{R}| \times 1}$): 
\begin{equation} \label{supp:eq:grmp_step3_sum}
\arc{Z}^{\mathrm{aggr}} = \sum_{i=1}^{|\mathcal{R}|} (M^{\alpha}_{:,i,:} \, \mathbf{1}_\alpha^\top) \odot \tilde{Z}^{\mathrm{aggr}}_{:,i,:} ,
\end{equation}
where $\mathbf{1}_\alpha \in \mathbb{R}^{C \times 1}$ is the all-one vector for broadcasting relation weights to all feature channels, and $\arc{Z}^{\mathrm{aggr}} \in \mathbb{R}^{|\mathcal{V}| \times C}$ denotes the per-node neighborhood representations after inter-relation message aggregation.   

To perform Eqs.~\eqref{supp:eq:grmp_step3_weight} and \eqref{supp:eq:grmp_step3_sum}, this step has the following FLOPs consumption:
\begin{equation} \label{supp:eq:grmp_step3_flops}
\mathrm{FLOPs}(\mathrm{GRMP}\!-\!\mathrm{{\small \circled{3}}}) = 2 |\mathcal{R}| |\mathcal{V}| C + |\mathcal{R}| \cdot 2|\mathcal{V}| C + (|\mathcal{R}|-1) |\mathcal{V}| C = 5 |\mathcal{R}| |\mathcal{V}| C - |\mathcal{V}| C . 
\end{equation}

\item The fourth step conducts a post-layer node-wise channel aggregation with the weight matrix $W^{\mathrm{out}} \in \mathbb{R}^{C \times C}$:
\begin{equation} \label{supp:eq:grmp_step4}
Z^{\mathrm{aggr}} = \arc{Z}^{\mathrm{aggr}} W^{\mathrm{out}} , 
\end{equation}
where $Z^{\mathrm{aggr}} \in \mathbb{R}^{|\mathcal{V}| \times C}$ denotes the channel-aggregated neighborhood representations. This step consumes the FLOPs as below:
\begin{equation} \label{supp:eq:grmp_step4_flops}
\mathrm{FLOPs}(\mathrm{GRMP}\!-\!\mathrm{{\small \circled{4}}}) = 2 |\mathcal{V}| C^2 .
\end{equation}

\item In the final step, the input feature of each node first performs self-update with the weight matrix $W^{\mathrm{self}} \in \mathbb{R}^{C \times C}$, and the self-updated node feature is further updated by its neighborhood representation via a gating mechanism:
\begin{equation} \label{supp:eq:grmp_step5}
Z' = Z W^{\mathrm{self}} \odot Z^{\mathrm{aggr}} ,
\end{equation}
where $Z' \in \mathbb{R}^{|\mathcal{V}| \times C}$ denotes output node features. This step has the FLOPs as below:
\begin{equation} \label{supp:eq:grmp_step5_flops}
\mathrm{FLOPs}(\mathrm{GRMP}\!-\!\mathrm{{\small \circled{5}}}) = 2 |\mathcal{V}| C^2 + |\mathcal{V}| C .
\end{equation}
\end{enumerate}

Therefore, by summing up the computational cost of five steps, the GRMP has the following FLOPs consumption in total:
\begin{equation} \label{supp:eq:rgconv_flops}
\mathrm{FLOPs}(\mathrm{GRMP}) = |\mathcal{R}| \cdot (2 \bar{d} + 7) |\mathcal{V}| C  + 6|\mathcal{V}| C^2 .
\end{equation}

\end{proof}

%%%%%%%%%%%%%%%%%%%%%%%%%%%%%%%%%%%%%%%%%%%%%%%%%%%%%%%%%%%%
%%%%%%%%%%%%%%%%%%%%%%%%%%%%%%%%%%%%%%%%%%%%%%%%%%%%%%%%%%%%

\section{Graphical Illustration of GRMP} \label{supp:sec:grmp_fig}

\vspace{-1.8mm}
\begin{figure}[h]
\centering
    \includegraphics[width=1.0\linewidth]{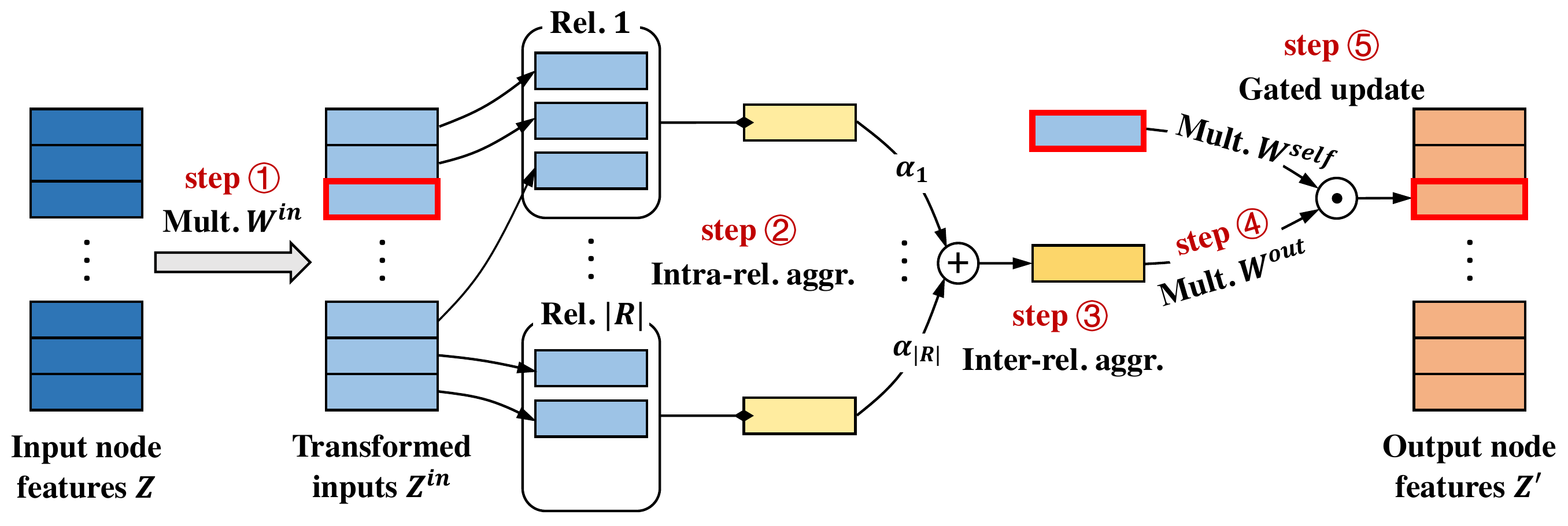}
    \vspace{-6.0mm}
    \caption{Graphical illustration for node representation update in the GRMP layer. We specifically show the neighborhood aggregation and representation update procedure of the node denoted in \textcolor{red}{red}. \emph{Abbr.}, Multi.: multiply with; Rel.: relation; aggr.: aggregation.}
    \label{supp:fig:grmp}
\end{figure}
\vspace{0.3mm}

%%%%%%%%%%%%%%%%%%%%%%%%%%%%%%%%%%%%%%%%%%%%%%%%%%%%%%%%%%%%

In Fig.~\ref{supp:fig:grmp}, we graphically illustrate the mechanism of node representation update in the GRMP layer. In specific, GRMP updates the node representation matrix from $Z$ to $Z'$ with the following steps: 
\begin{enumerate}[label=\protect\circled{\arabic*}]
    \item A linear layer transforms the input node representations $Z \in \mathbb{R}^{|\mathcal{V}| \times C}$ to $Z^{\mathrm{in}} \in \mathbb{R}^{|\mathcal{V}| \times C}$, which aggregates the feature channels of each node at the beginning of the layer.
    \item For each node, its neighbors are assigned to different groups according to their relations with the node, and the neighbors in each group are aggregated in a channel-wise way.  
    \item The aggregated messages of different relational groups are then scaled by per-relation scalar weights $\{\alpha_r\}_{r=1}^{|\mathcal{R}|}$ and aggregated to the neighborhood representations $\arc{Z}^{\mathrm{aggr}} \in \mathbb{R}^{|\mathcal{V}| \times C}$.
    \item $\arc{Z}^{\mathrm{aggr}}$ is then transformed by a linear layer to aggregate the feature channels of each node's neighbors, deriving the transformed neighborhood representations $Z^{\mathrm{aggr}} \in \mathbb{R}^{|\mathcal{V}| \times C}$.
    \item Finally, $Z^{\mathrm{aggr}}$ serves as the gate to update all node representations, deriving the output node representations $Z' \in \mathbb{R}^{|\mathcal{V}| \times C}$.
\end{enumerate}

%%%%%%%%%%%%%%%%%%%%%%%%%%%%%%%%%%%%%%%%%%%%%%%%%%%%%%%%%%%%
%%%%%%%%%%%%%%%%%%%%%%%%%%%%%%%%%%%%%%%%%%%%%%%%%%%%%%%%%%%%

\section{Detailed Model Architecture for Image Modeling} \label{supp:sec:arch}

%%%%%%%%%%%%%%%%%%%%%%%%%%%%%%%%%%%%%%%%%%%%%%%%%%%%%%%%%%%%

\begin{table}[h]
\begin{spacing}{1.1}
    \vspace{-4mm}
    \centering
    \caption{Detailed architectures of {\model}-T/S/B for ImageNet-1K classification (\#parameters and FLOPs are computed under the resolution $224\times224$). $H \times W$: input image resolution; $C$: number of feature channels; $\gamma$: FFN's hidden dimension ratio; $K$: number of K-nearest neighbors for medium-range edges; $\mathcal{Y}$: label set for classification. ``T'' denotes the tiny model; ``S'' denotes the small model; ``B'' denotes the base model.}
    \vspace{0.5mm}
    \label{supp:tab:cv_arch}
    \begin{adjustbox}{max width=1\linewidth}
        \begin{tabular}{c|c|c|c|c}
            \toprule
            \bf{Module} & \bf{\#Patches} & \bf{{\model}-T} & \bf{{\model}-S} & \bf{{\model}-B} \\
            \midrule \midrule
            \bf{Stem} & $\frac{H}{4}\!\times\!\frac{W}{4}$ & $4\!\times\!4$ conv, stride$\;\!$=$\;\!$4 & $4\!\times\!4$ conv, stride$\;\!$=$\;\!$4 & $4\!\times\!4$ conv, stride$\;\!$=$\;\!$4 \\
            \midrule 
            \splitcell{\bf{Graph} \\ \bf{Construction}} & $\frac{H}{4}\!\times\!\frac{W}{4}$ & \bsplitcell{short-range edges, \\ long-range edges} & \bsplitcell{short-range edges, \\ long-range edges} & \bsplitcell{short-range edges, \\ long-range edges} \\
            \midrule
            \bf{Stage 1} & $\frac{H}{4}\!\times\!\frac{W}{4}$ & \bsplitcell{GRMP ($C\!=\!96$), \\ FFN ($C\!=\!96$, $\gamma\!=\!4$)}$\times 2$ & \bsplitcell{GRMP ($C\!=\!96$), \\ FFN ($C\!=\!96$, $\gamma\!=\!4$)}$\times 2$ & \bsplitcell{GRMP ($C\!=\!128$), \\ FFN ($C\!=\!128$, $\gamma\!=\!4$)}$\times 2$ \\
            \midrule
            \bf{Downsample} & $\frac{H}{8}\!\times\!\frac{W}{8}$ & PatchMerging & PatchMerging & PatchMerging \\
            \midrule
            \splitcell{\bf{Graph} \\ \bf{Construction}} & $\frac{H}{8}\!\times\!\frac{W}{8}$ & \bsplitcell{short-range edges, \\ medium-range edges ($K\!=\!12$), \\ long-range edges} & \bsplitcell{short-range edges, \\ medium-range edges ($K\!=\!12$), \\ long-range edges} & \bsplitcell{short-range edges, \\ medium-range edges ($K\!=\!12$), \\ long-range edges} \\
            \midrule
            \bf{Stage 2} & $\frac{H}{8}\!\times\!\frac{W}{8}$ & \bsplitcell{GRMP ($C\!=\!192$), \\ FFN ($C\!=\!192$, $\gamma\!=\!4$)}$\times 2$ & \bsplitcell{GRMP ($C\!=\!192$), \\ FFN ($C\!=\!192$, $\gamma\!=\!4$)}$\times 2$ & \bsplitcell{GRMP ($C\!=\!256$), \\ FFN ($C\!=\!256$, $\gamma\!=\!4$)}$\times 2$ \\
            \midrule
            \bf{Downsample} & $\frac{H}{16}\!\times\!\frac{W}{16}$ & PatchMerging & PatchMerging & PatchMerging \\
            \midrule
            \splitcell{\bf{Graph} \\ \bf{Construction}} & $\frac{H}{16}\!\times\!\frac{W}{16}$ & \bsplitcell{short-range edges, \\ medium-range edges ($K\!=\!12$), \\ long-range edges} & \bsplitcell{short-range edges, \\ medium-range edges ($K\!=\!12$), \\ long-range edges} & \bsplitcell{short-range edges, \\ medium-range edges ($K\!=\!12$), \\ long-range edges} \\
            \midrule
            \bf{Stage 3} & $\frac{H}{16}\!\times\!\frac{W}{16}$ & \bsplitcell{GRMP ($C\!=\!384$), \\ FFN ($C\!=\!384$, $\gamma\!=\!4$)}$\times 6$ & \bsplitcell{GRMP ($C\!=\!384$), \\ FFN ($C\!=\!384$, $\gamma\!=\!4$)}$\times 18$ & \bsplitcell{GRMP ($C\!=\!512$), \\ FFN ($C\!=\!512$, $\gamma\!=\!4$)}$\times 18$ \\
            \midrule
            \bf{Downsample} & $\frac{H}{32}\!\times\!\frac{W}{32}$ & PatchMerging & PatchMerging & PatchMerging \\
            \midrule
            \splitcell{\bf{Graph} \\ \bf{Construction}} & $\frac{H}{32}\!\times\!\frac{W}{32}$ & \bsplitcell{short-range edges, \\ medium-range edges ($K\!=\!12$), \\ long-range edges} & \bsplitcell{short-range edges, \\ medium-range edges ($K\!=\!12$), \\ long-range edges} & \bsplitcell{short-range edges, \\ medium-range edges ($K\!=\!12$), \\ long-range edges} \\
            \midrule
            \bf{Stage 4} & $\frac{H}{32}\!\times\!\frac{W}{32}$ & \bsplitcell{GRMP ($C\!=\!768$), \\ FFN ($C\!=\!768$, $\gamma\!=\!4$)}$\times 2$ & \bsplitcell{GRMP ($C\!=\!768$), \\ FFN ($C\!=\!768$, $\gamma\!=\!4$)}$\times 2$ & \bsplitcell{GRMP ($C\!=\!1024$), \\ FFN ($C\!=\!1024$, $\gamma\!=\!4$)}$\times 2$ \\
            \midrule
            \bf{Head} & $1\!\times\!1$ & Pooling \& Linear ($|\mathcal{Y}|\!=\!1000$)  & Pooling \& Linear ($|\mathcal{Y}|\!=\!1000$) & Pooling \& Linear ($|\mathcal{Y}|\!=\!1000$) \\
            \midrule
            \multicolumn{2}{c|}{\bf{\#Parameters (M)}} & 28.8 & 50.2 & 88.7 \\
            \midrule
            \multicolumn{2}{c|}{\bf{FLOPs (G)}} & 4.6 & 8.8 & 15.6 \\
            \bottomrule
        \end{tabular}
    \end{adjustbox}
    \vspace{-0.5mm}
\end{spacing}
\end{table}

%%%%%%%%%%%%%%%%%%%%%%%%%%%%%%%%%%%%%%%%%%%%%%%%%%%%%%%%%%%%

For image modeling, we basically follow the hierarchical architecture proposed by Swin Transformer~\citep{liu2021swin}, as summarized in Tab.~\ref{supp:tab:cv_arch}. The architecture begins with a patch embedding module implemented by non-overlapping 2D convolution. After that, the model is split into 4 modeling stages: (1) the number of patches (\emph{i.e.}, nodes in our graph) is reduced to a quarter across consecutive stages by the ``PatchMerging'' operation~\citep{liu2021swin}; (2) increasing feature channels $[C, 2C, 4C, 8C]$ are used for all stages. We place a graph construction layer before each modeling stage to update the multi-relational graph structure. For the first stage, we only use short- and long-range edges to reduce the computational cost (computing medium-range edges by representation similarity comparison is expensive in the first stage with many patches), and the relational edges of all three ranges are adopted in the last three stages. Each stage is composed of multiple modeling blocks, where each block contains a GRMP layer (Sec.~\ref{sec:method:layer}) for relational message passing and a feed-forward network (FFN)~\citep{vaswani2017attention} for feature transformation. In the end, a global average pooling layer produces the whole-image representation, and a linear head outputs the final prediction. We adjust the number of feature channels and the number of blocks in each stage to derive {\model}-T, {\model}-S and {\model}-B with standard number of parameters and FLOPs. We implement the models based on the PyTorch~\citep{pytorch} deep learning library. 

%%%%%%%%%%%%%%%%%%%%%%%%%%%%%%%%%%%%%%%%%%%%%%%%%%%%%%%%%%%%
%%%%%%%%%%%%%%%%%%%%%%%%%%%%%%%%%%%%%%%%%%%%%%%%%%%%%%%%%%%%

\section{Introduction to Protein Structure} \label{supp:sec:protein_intro}

Proteins are macromolecules that perform critical biological functions in living organisms. A protein owns multiple levels of structures, as described below:
\vspace{-1mm}
\begin{itemize}
    \item \textbf{Primary structure} (Fig.~\ref{supp:fig:protein_structure}(a)). At the chemical level, a protein is composed of one or multiple chains of amino acid residues, forming the protein sequence which is the primary protein structure. In the protein sequence $s=(s_1, s_2, \cdots, s_L)$, each element $s_l$ denotes a type of amino acid (there are 20 common amino acids and two rare ones, \emph{i.e.}, Selenocysteine and Pyrrolysine). The primary structure tells the sequential order of amino acids in a protein, but otherwise it does not reveal any information about the 3D folded structure of the protein. This fact limits its usefulness in the analysis/prediction of protein functions, due to the principle that ``protein folded structures largely determine their functions''~\citep{harms2010analyzing}.
    \item \textbf{Secondary structure} (Fig.~\ref{supp:fig:protein_structure}(b)). The secondary structures of proteins are some repeatedly-occurred local structures like the $\alpha$-helices shown in Fig.~\ref{supp:fig:protein_structure}(b). These structures are stabilized by hydrogen bonds, and, together with the tight turns and flexible loops in between, they constitute the complete protein folded structure.  
    \item \textbf{Tertiary structure} (Fig.~\ref{supp:fig:protein_structure}(c)). The spatial arrangement of different secondary structure components leads to the formation of the tertiary structure (\emph{i.e.}, the folded structure of a protein). The tertiary structure is jointly held by short-range interactions like hydrogen bonding and long-range interactions like hydrophobic interactions. Thanks to the recent advances of highly accurate protein folded structure predictors based on deep learning~\citep{jumper2021highly,baek2021accurate}, we can now efficiently acquire numerous previously unknown protein tertiary structures with reasonable confidence. These advances are expected to promote the understanding of protein functions based on tertiary structures. 
\end{itemize}

%%%%%%%%%%%%%%%%%%%%%%%%%%%%%%%%%%%%%%%%%%%%%%%%%%%%%%%%%%%%

\vspace{-1.0mm}
\begin{figure}[t]
\centering
    \includegraphics[width=1.0\linewidth]{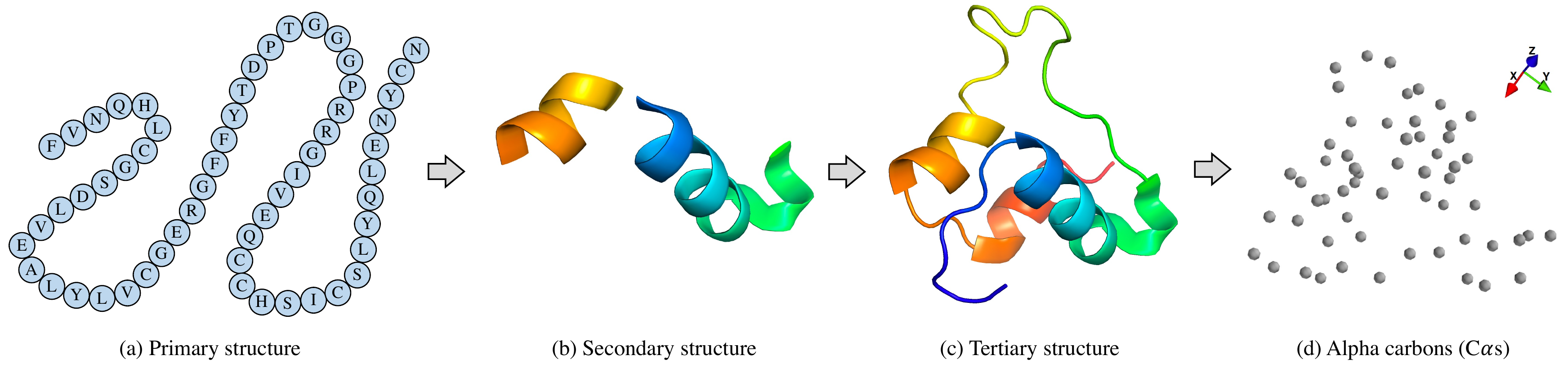}
    \vspace{-6.5mm}
    \caption{The primary structure, secondary structure, tertiary structure and all alpha carbons of the single-chain insulin protein (ID in PDB~\citep{berman2000protein}: 2LWZ).}
    \label{supp:fig:protein_structure}
    \vspace{-0.5mm}
\end{figure}

%%%%%%%%%%%%%%%%%%%%%%%%%%%%%%%%%%%%%%%%%%%%%%%%%%%%%%%%%%%%

In this work, we focus on protein function prediction tasks based on tertiary structures. Specifically, we adopt an informative and light-weight representation format, \emph{i.e.}, \textbf{all alpha carbons (C$\bm{\alpha}$s) in the tertiary structure} (Fig.~\ref{supp:fig:protein_structure}(d)), which is widely used in the literature~\citep{gligorijevic2021structure,baldassarre2021graphqa,zhang2022protein}. A C$\alpha$ can be seen as the center of its corresponding amino acid, and thus the overall tertiary structure of a protein can be well captured by the collection of all C$\alpha$s. At this time, the C$\alpha$s are actually a set of separate points in the 3D space, since there is no chemical bond among them. To better describe the interactions within a protein, we seek to construct edges among C$\alpha$s and lead to a more informative representation format, \emph{i.e.}, the \textbf{C$\bm{\alpha}$ graph}.

%%%%%%%%%%%%%%%%%%%%%%%%%%%%%%%%%%%%%%%%%%%%%%%%%%%%%%%%%%%%
%%%%%%%%%%%%%%%%%%%%%%%%%%%%%%%%%%%%%%%%%%%%%%%%%%%%%%%%%%%%

\section{More Experimental Setups} \label{supp:sec:setup}

\subsection{More Experimental Setups on ImageNet-1K Classification} \label{supp:sec:setup:cls}

In the following, we state the detailed model and training configurations of (1) training on ImageNet-1K from scratch and (2) pretraining on ImageNet-22K followed by ImageNet-1K fine-tuning. For training configurations, we mainly follow the standards set up by Swin Transformer~\citep{liu2021swin} for fair comparison. 

\subsubsection{From-scratch Training on ImageNet-1K} \label{supp:sec:setup:cls:scratch}

\textbf{Model configurations.} The whole model architectures of {\model}-T, {\model}-S and {\model}-B are presented in Tab.~\ref{supp:tab:cv_arch}. For medium-range edges, 12 nearest semantic neighbors of each patch are linked to it to capture medium-range interactions. For long-range edges, we compute the representations of per-patch global-context virtual nodes by a stack of depth-wise 2D convolutions with the accumulative receptive field as 7, and these virtual nodes are linked to their corresponding patches. 

\textbf{Training configuration.} An AdamW~\citep{loshchilov2017decoupled} optimizer (betas: [$0.9$, $0.999$], weight decay: $0.05$) is employed to train each {\model} model for 300 epochs. We set the batch size as 2048, the base learning rate as $0.002$ and the gradient clipping norm as $5.0$. A cosine learning rate scheduler is adopted to adjust the learning rate from $2.0 \times 10^{-6}$ to $0.002$ in the first 20 warm-up epochs, and the learning rate is decayed to $2.0 \times 10^{-5}$ in the rest epochs with a cosine rate. The stochastic depth drop rates are set to $0.15$, $0.3$ and $0.5$ respectively for {\model}-T, {\model}-S and {\model}-B. We follow the augmentation functions and mixup strategies used in Swin Transformer. All experiments are conducted on 16 Tesla-V100-32GB GPUs.

%%%%%%%%%%%%%%%%%%%%%%%%%%%%%%%%%%%%%%%%%%%%%%%%%%%%%%%%%%%%

\subsubsection{Pre-training on ImageNet-22K and Fine-tuning on ImageNet-1K} \label{supp:sec:setup:cls:pretrain}

\textbf{Model configurations.} The {\model}-B with the standard model architecture as in Tab.~\ref{supp:tab:cv_arch} is used, except that the last linear classification head outputs 21,841-dimensional logits to perform ImageNet-22K classification. 

\textbf{Training configuration.} For ImageNet-22K pre-training, we train {\model}-B with an AdamW optimizer (betas: [$0.9$, $0.999$], weight decay: $0.05$) for 90 epochs with the batch size 4096 and the image resolution $224 \times 224$. A cosine learning rate scheduler is employed to linearly increase the learning rate from $0$ to $4.0 \times 10^{-3}$ in the first 5 warm-up epochs, and it decays the learning rate to $1.0 \times 10^{-6}$ in the rest epochs with a cosine rate. The stochastic depth drop rate is set as $0.1$. All augmentation functions and mixup strategies follow Swin Transformer. The pre-training is performed on 64 Tesla-V100-32GB GPUs.

For ImageNet-1K fine-tuning, the pre-trained model is fine-tuned for 30 epochs by an AdamW optimizer (betas: [$0.9$, $0.999$], weight decay: $1.0 \times 10^{-8}$). The cosine learning rate scheduler adjusts the learning rate from $8.0 \times 10^{-8}$ to $8.0 \times 10^{-5}$ in the first 5 warm-up epochs, and the learning rate is decayed to $8.0 \times 10^{-7}$ in the rest epochs with a cosine rate. The stochastic depth drop rate is set as $0.2$. Both Mixup~\citep{zhang2017mixup} and CutMix~\citep{yun2019cutmix} are muted during fine-tuning, following FocalNet~\cite{yang2022focal}. The fine-tuning is performed on 16 Tesla-V100-32GB GPUs. 

%%%%%%%%%%%%%%%%%%%%%%%%%%%%%%%%%%%%%%%%%%%%%%%%%%%%%%%%%%%%

\subsubsection{Throughput Computation} \label{supp:sec:setup:cls:throughput}

We follow Swin Transformer to measure the inference throughput on a Tesla-V100-32GB GPU with batch size 128. We adopt graph checkpoints to enhance the speed of inferring an image that has been seen. During inference, we add short-range edges to the list of medium-range edges and merge their corresponding relations to further promote the efficiency. 

%%%%%%%%%%%%%%%%%%%%%%%%%%%%%%%%%%%%%%%%%%%%%%%%%%%%%%%%%%%%

\subsection{More Experimental Setups on COCO Object Detection} \label{supp:sec:setup:det}

\textbf{Model configurations.} We use the {\model}-T, {\model}-S and {\model}-B pre-trained on ImageNet-1K as the backbone of Mask R-CNN~\citep{he2017mask}. In specific, we take the patch representations output from all four modeling stages as the inputs of the Feature Pyramid Network (FPN)~\cite{fpn}. For medium-range edge construction on the high-resolution images of COCO, we select the semantic neighbors of each patch from a $112 \times 112$ dilated window (dilation ratio: 2) to reduce the computational cost. For long-range edge construction, the representations of per-patch global-context virtual nodes are computed by a stack of depth-wise 2D convolutions with the accumulative receptive field as 31, and these virtual nodes are linked to their corresponding patches. 

\textbf{Training configurations.} We follow Swin Transformer~\citep{liu2021swin} to adopt a multi-scale training strategy where the shorter side of an image is resized to $[480, 800]$, and the longer side is with length 1,333. An AdamW~\citep{loshchilov2017decoupled} optimizer (betas: [$0.9$, $0.999$], weight decay: $0.05$) with initial learning rate $1.0 \times 10^{-4}$ is employed for model training. In the 1{\texttimes} schedule with 12 total epochs, the learning rate is decayed at the 9th and 11th epoch with the decay rate 0.1. In the 3{\texttimes} schedule with 36 total epochs, the learning rate is decayed at the 27th and 33rd epoch with the decay rate 0.1. The stochastic depth drop rate is set as 0.1, 0.2, 0.3 in 1{\texttimes} schedule and 0.25, 0.5, 0.5 in 3{\texttimes} schedule for {\model}-T/S/B, respectively. All models are trained with batch size 8 on 8 Tesla-V100-32GB GPUs (\emph{i.e.}, one image per GPU). Our implementations are based on the mmdetection~\citep{mmdetection} framework.

%%%%%%%%%%%%%%%%%%%%%%%%%%%%%%%%%%%%%%%%%%%%%%%%%%%%%%%%%%%%

\subsection{More Experimental Setups on ADE20K Semantic Segmentation} \label{supp:sec:setup:seg}

\textbf{Model configurations.} The {\model}-T, {\model}-S and {\model}-B pre-trained on ImageNet-1K serve as the backbone of UperNet~\citep{xiao2018unified} to perform semantic segmentation. The patch representations output by all four modeling stages serve as the inputs of the Feature Pyramid Network (FPN)~\cite{fpn}. For medium-range edge construction, each patch is connected with its semantic neighbors from a $144 \times 144$ dilated window (dilation ratio: 2). For long-range edge construction, we use a stack of depth-wise 2D convolutions with accumulative receptive field 31 to compute the representations of per-patch global-context virtual nodes, and we connect these virtual nodes with their corresponding patches. 

\textbf{Training configurations.} All input images are resized to the resolution $512 \times 512$. We adopt an AdamW~\citep{loshchilov2017decoupled} optimizer (betas: [$0.9$, $0.999$], weight decay: $0.01$) to train the model for 160K iterations with the base learning rate $6.0 \times 10^{-5}$. All models are trained with batch size 16 on 8 Tesla-V100-32GB GPUs (\emph{i.e.}, two images per GPU). Our implementations are based on the mmsegmentation~\citep{mmsegmentation} framework. 

%%%%%%%%%%%%%%%%%%%%%%%%%%%%%%%%%%%%%%%%%%%%%%%%%%%%%%%%%%%%

\subsection{More Experimental Setups on Protein Function Prediction} \label{supp:sec:setup:protein}

\textbf{Edge message passing.} \citet{zhang2022protein} proposes to enhance the GearNet by edge-level message passing, which well captures the interactions between edges. To compare with the GearNet-Edge model enhanced in this way, we adapt the same edge message passing scheme to our {\model}. 

Specifically, based on the constructed multi-relational graph $\mathcal{G} = (\mathcal{V}, \mathcal{E}, \mathcal{R})$, we further construct a \emph{line graph}~\citep{harary1960some} $\mathcal{G}_{\mathrm{line}} = (\mathcal{V}_{\mathrm{line}}, \mathcal{E}_{\mathrm{line}}, \mathcal{R}_{\mathrm{line}})$. In this graph, each node $v \in \mathcal{V}_{\mathrm{line}}$ corresponds to an edge in the original graph $\mathcal{G}$. There will an edge $(u,v,r)$ between nodes $u, v \in \mathcal{V}_{\mathrm{line}}$ if the corresponding edges of $u$ and $v$ are adjacent in the original graph, and the edge type $r \in \{0, 1, \cdots, 7\}$ is determined by the angle $\angle_{(u,v)}$'s allocation in 8 equally-divided bins of $[0, \pi]$ ($\angle_{(u,v)}$ denotes the angle between the corresponding edges of $u$ and $v$ in the original graph). Based on this multi-relational line graph, we employ the GRMP layer (Sec.~\ref{sec:method:layer}) to propagate information between the nodes in $\mathcal{G}_{\mathrm{line}}$ and thus between the edges in the original graph $\mathcal{G}$. Readers are referred to \citet{zhang2022protein} for more details. We name the {\model} equipped with such an edge message passing scheme as {\model}-Edge. 

\textbf{Dataset details.} Two standard protein function prediction benchmarks are used in our experiments:
\vspace{-5mm}
\begin{itemize}
    \item \textbf{Enzyme Commission (EC) number prediction}~\cite{gligorijevic2021structure} requires the model to predict the EC numbers of a protein based on its tertiary structure, where the EC numbers describe the protein's catalysis of biochemical reactions. This task involves the binary prediction of 538 different EC numbers, forming 538 binary classification problems. This dataset contains 15,550 training, 1,729 validation and 1,919 test proteins.  
    \item \textbf{Gene Ontology (GO) term prediction}~\citep{gligorijevic2021structure} seeks to predict the GO terms owning by a protein based on its tertiary structure. This benchmark is further split into three branches based on three types of ontologies: biological process (BP), molecular function (MF) and cellular component (CC). Each branch is formed by multiple binary classification problems. The GO benchmark dataset contains 29,898 training, 3,322 validation and 3,415 test proteins. 
\end{itemize}
\vspace{-1mm}

\textbf{Model configurations.} The backbone architecture of {\model} is described in Sec.~\ref{sec:app:protein:arch}. Based on this backbone, we append a three-layer MLP with the architecture $\mathrm{Linear}(C_{\mathrm{out}}, C_{\mathrm{out}}) \rightarrow \mathrm{ReLU} \rightarrow \mathrm{Linear}(C_{\mathrm{out}}, C_{\mathrm{out}}) \rightarrow \mathrm{ReLU} \rightarrow \mathrm{Linear}(C_{\mathrm{out}}, N_{\mathrm{task}})$ to predict the binary classification logits of all tasks simultaneously ($C_{\mathrm{out}}$: the dimension of output protein representation; $N_{\mathrm{task}}$: the number of binary classification tasks). We employ the binary cross entropy loss for model optimization. 

\textbf{Training configurations.} An AdamW~\citep{loshchilov2017decoupled} optimizer (betas: [$0.9$, $0.999$], weight decay: $0$) is utilized to train the model for 200 epochs. We adopt a cosine learning rate scheduler to linearly increase the learning rate from $1.0 \times 10^{-7}$ to $1.0 \times 10^{-4}$, and the learning rate is decayed to $1.0 \times 10^{-6}$ in the rest epochs with a cosine rate. All models are trained with batch size 16 on 4 Tesla-V100-32GB GPUs (\emph{i.e.}, four proteins per GPU). 

%%%%%%%%%%%%%%%%%%%%%%%%%%%%%%%%%%%%%%%%%%%%%%%%%%%%%%%%%%%%
%%%%%%%%%%%%%%%%%%%%%%%%%%%%%%%%%%%%%%%%%%%%%%%%%%%%%%%%%%%%

\section{{\model} for Knowledge Graph Completion} \label{supp:sec:KG}

%%%%%%%%%%%%%%%%%%%%%%%%%%%%%%%%%%%%%%%%%%%%%%%%%%%%%%%%%%%%

%%% KG %%%%%%%%%%%%%%%
\begin{table}[h]
\begin{spacing}{1.1}
    \centering
    \caption{Performance comparison on knowledge graph completion benchmarks. ``$\downarrow$'' denotes the metric is the lower the better; ``$\uparrow$'' denotes the metric is the higher the better.}
    \label{supp:tab:kg_results}
    \vspace{0.5mm}
    \begin{adjustbox}{max width=1.0\textwidth}
        \begin{tabular}{ll|ccccc|ccccc}
            \toprule
            \multirow{2}{*}{\bf{Class}} & \multirow{2}{*}{\bf{Model}}
            & \multicolumn{5}{c|}{\bf{FB15k-237}} & \multicolumn{5}{c}{\bf{WN18RR}} \\
            & & \bf{MR}$_{\downarrow}$ & \bf{MRR}$_{\uparrow}$ & \bf{H@1}$_{\uparrow}$ & \bf{H@3}$_{\uparrow}$ & \bf{H@10}$_{\uparrow}$ & \bf{MR}$_{\downarrow}$ & \bf{MRR}$_{\uparrow}$ & \bf{H@1}$_{\uparrow}$ & \bf{H@3}$_{\uparrow}$ & \bf{H@10}$_{\uparrow}$ \\
            \midrule
            \multirow{4}{*}{\bf{Embedding}}
            & TransE & 357 & 0.294 & - & - & 0.465 & 3384 & 0.226 & - & - & 0.501 \\
            & DistMult & 254 & 0.241 & 0.155 & 0.263 & 0.419 & 5110 & 0.43 & 0.39 & 0.44 & 0.49 \\
            & ComplEx & 339 & 0.247 & 0.158 & 0.275 & 0.428 & 5261 & 0.44 & 0.41 & 0.46 & 0.51 \\
            & RotatE & 177 & 0.338 & 0.241 & 0.375 & 0.553 & 3340 & 0.476 & 0.428 & 0.492 & 0.571 \\
            \midrule
            \multirow{3}{*}{\bf{GNN}}
            & RGCN & 221 & 0.273 & 0.182 & 0.303 & 0.456 & 2719 & 0.402 & 0.345 & 0.437 & 0.494 \\
            & CompGCN & 197 & 0.355 & 0.264 & 0.390 & 0.535 & 3533 & 0.479 & 0.443 & 0.494 & 0.546 \\
            \rowcolor{Gray}
            \cellcolor{white} & {\model} & \bf{126} & \bf{0.374} & \bf{0.276} & \bf{0.415} & \bf{0.571} & \bf{680} & \bf{0.527} & \bf{0.472} & \bf{0.547} & \bf{0.636} \\ 
            \bottomrule
        \end{tabular}
    \end{adjustbox}
\end{spacing}
\end{table}

%%%%%%%%%%%%%%%%%%%%%%%%%%%%%%%%%%%%%%%%%%%%%%%%%%%%%%%%%%%%

\textbf{Datasets.} We conduct experiments on two standard knowledge graphs, FB15k-237~\citep{toutanova2015observed} and WN18RR~\citep{dettmers2018convolutional}. FB15k-237 contains 14,541 entities, 237 relation, 272,115 training triplets, 17,535 validation triplets and 20,466 test triplets. WN18RR has 40,943 entities, 11 relations, 86,835 training triplets, 3,034 validation triplets and 3,134 test triplets. 
We follow the TorchDrug library~\citep{zhu2022torchdrug} to process knowledge graphs. For each triplet $<\!\!h,r,t\!\!>$, its flipped counterpart $<\!\!t,r^{-1},h\!\!>$ is included for data augmentation. All triplets from the validation and test sets are removed to form the fact graph for training. 

\textbf{Model architecture.} In this experiment, we instantiate the {\model} with 6 GRMP layers, each with 32 feature channels. Upon the {\model}, we adopt a two-layer MLP activated by ReLU to score each candidate triplet.  

\textbf{Training and evaluation.} For model training, we follow the default setting in the TorchDrug library~\citep{zhu2022torchdrug} to sample 32 negative triplets for each positive triplet and perform binary classification with the binary cross entropy loss. On both knowledge graphs, the {\model} is trained for 20 epochs by an Adam optimizer with learning rate $5.0 \times 10^{-3}$ and batch size 16. Model training is performed on 4 Tesla-V100-32GB GPUs. For evaluation, we follow previous works~\cite{vashishth2019composition,zhu2021neural} to report mean rank (MR), mean reciprocal rank (MRR) and HITS at N (H@N) for knowledge graph completion. 

\textbf{Baselines.} We compare the proposed {\model} with four classical knowledge graph embedding methods, \emph{i.e.}, TransE~\citep{bordes2013translating}, DistMult~\citep{yang2014embedding}, ComplEx~\citep{trouillon2016complex} and RotatE~\citep{sun2019rotate}, and two typical relational GNNs, \emph{i.e.}, RGCN~\citep{schlichtkrull2018modeling} and CompGCN~\citep{vashishth2019composition}. 

\textbf{Results.} We present the performance of {\model} and baselines in Tab.~\ref{supp:tab:kg_results}. It can be observed that {\model} clearly outperforms the embedding-based and GNN baselines on all metrics of two datasets. Although knowledge graphs contain no spatial information, they are representative multi-relational graphs and are good test fields for evaluating the capacity of relational GNNs. The superior performance of {\model} on these benchmarks demonstrates the effectiveness of the GRMP layer on modeling the complex relational patterns in knowledge graphs. 
% Especially, compared with the CompGCN that is constructed by the CompGConv layer (Sec.~\ref{supp:sec:results:compare}), the {\model} based on the GRMP layer achieves consistently better performance, which demonstrates the superiority of GRMP over CompGConv on knowledge graph completion tasks. 

%%%%%%%%%%%%%%%%%%%%%%%%%%%%%%%%%%%%%%%%%%%%%%%%%%%%%%%%%%%%

%%%%%%%%%%%%%%%%%%%%%%%%%%%%%%%%%%%%%%%%%%%%%%%%%%%%%%%%%%%%

%%% grmp ablation %%%%%%%%%%%%%%%
\begin{wraptable}{r}{0.55\textwidth}
\vspace{-18.9mm}
\begin{spacing}{1.05}
\caption{\small Ablation study of the key components of GRMP on ImageNet-1K with {\model}-T.}
\label{supp:tab:ablation:grmp}
% \vspace{0.4mm}
\begin{adjustbox}{max width=1.0\linewidth}
\begin{tabular}{l|ccc|c}
\toprule
\multirow{2}{*}{\bf{Setting}} & \bf{\#Params.} & \bf{FLOPs} & \bf{Throughput} & \bf{Top-1}  \\
& \bf{(M)} & \bf{(G)} & \bf{(imgs/s)} & \bf{Acc (\%)} \\
\midrule
GRMP & 28.8 & 4.6 & 530.3 & 82.3 \\
\midrule
GRMP (gating $\rightarrow$ addition) & 28.8 & 4.6 & 530.3 & 81.6$_{(\downarrow\!\;0.7)}$ \\
GRMP ($\alpha_r(v) \rightarrow |\mathcal{R}|^{-1}$) & 28.8 & 4.6 & 567.4 & 81.9$_{(\downarrow\!\;0.4)}$ \\
GRMP (\emph{w/o} $W^{\mathrm{in}}$) & 26.7 & 4.3 & 561.9 & 81.7$_{(\downarrow\!\;0.6)}$ \\
GRMP (\emph{w/o} $W^{\mathrm{out}}$) & 26.7 & 4.3 & 562.5 & 81.5$_{(\downarrow\!\;0.8)}$ \\
\bottomrule
\end{tabular}
\end{adjustbox}
\end{spacing}
\vspace{-5mm}
\end{wraptable} 

%%%%%%%%%%%%%%%%%%%%%%%%%%%%%%%%%%%%%%%%%%%%%%%%%%%%%%%%%%%%

\section{More Ablation Study} \label{supp:sec:ablation}

\subsection{Effect of GRMP Components} \label{supp:sec:ablation:grmp}

In Tab.~\ref{supp:tab:ablation:grmp}, we analyze the key components of GRMP by substituting or removing the original component. This part of ablation studies are conducted on ImageNet-1K classification with {\model}-T.

\textbf{Effect of gating mechanism.} In the first row of the second block, we study the importance of the gating mechanism in GRMP by substituting the Hadamard product in the step {\small \circled{5}} of Eq.~\eqref{eq:grmp} with the addition. After such a change, the top-1 accuracy decays by 0.7\%. This performance decay demonstrates that, by using the separable graph convolution scheme in GRMP, the gating operation is more suitable than addition for node representation update (in contrast to the additive node representation update of RGConv in Eq.~\eqref{eq:rgconv}), which shares similar insights with the modulation mechanism in FocalNet~\citep{yang2022focal}.

\textbf{Effect of node-adaptive relation weighting.} In the second row of the second block, we replace GRMP's node-adaptive relation weighting operation with simply taking the mean over all relations. This change leads to a 0.4\% drop of accuracy. This relation weighting operation helps the GRMP layer to adaptively aggregate the messages of different relations based on each node's status, which benefits the model performance.

\textbf{Effect of pre-layer and post-layer node-wise channel aggregation.} In the third and fourth rows of the second block, we respectively evaluate the model variants without $W^{\mathrm{in}}$ and $W^{\mathrm{out}}$. Under these two settings, the model accuracy decays by 0.6\% and 0.8\%, respectively. Therefore, it is important to perform both pre-layer and post-layer node-wise channel aggregation in the GRMP layer.   

%%%%%%%%%%%%%%%%%%%%%%%%%%%%%%%%%%%%%%%%%%%%%%%%%%%%%%%%%%%%

\subsection{Effect of Multi-range Edges for Protein Structure Modeling} \label{supp:sec:ablation:edge}

%%%%%%%%%%%%%%%%%%%%%%%%%%%%%%%%%%%%%%%%%%%%%%%%%%%%%%%%%%%%

%%% edge ablation %%%%%%%%%%%%%%%
\begin{wraptable}{r}{0.36\textwidth}
\begin{spacing}{1.0}
\small
\vspace{-9.8mm}
\caption{\small Ablation study of multi-range edges on EC with {\model}.}
\label{supp:tab:ablation:edge}
\vspace{-1.8mm}
\center
\begin{adjustbox}{max width=1.0\linewidth}
\begin{tabular}{ccc|c}
\toprule
\bf{short} & \bf{medium} & \bf{long} & $\mathbf{F}_\mathbf{max}$ \\ 
\midrule
\checkmark & & & \bf{0.750} \\
& \checkmark & & 0.708 \\
& & \checkmark & 0.647 \\
\midrule
\checkmark & \checkmark &  & 0.755 \\
\checkmark & & \checkmark & \bf{0.760} \\
& \checkmark & \checkmark & 0.720 \\
\midrule
\checkmark & \checkmark & \checkmark & \bf{0.768} \\
\bottomrule
\end{tabular}
\end{adjustbox}
\end{spacing}
\vspace{-5.3mm}
\end{wraptable} 

%%%%%%%%%%%%%%%%%%%%%%%%%%%%%%%%%%%%%%%%%%%%%%%%%%%%%%%%%%%%

Tab.~\ref{supp:tab:ablation:edge} shows the performance of {\model} on the EC function prediction benchmark by using different ranges of edges. When a single range of edges are employed, the model with short-range edges obtains the highest $\mathrm{F}_{\mathrm{max}}$ score 0.750. This result illustrates the importance of capturing short-range interactions for protein structure modeling, which coincides with the fact that many short-range interactions (\emph{e.g.}, peptide and hydrogen bonds) contribute to the formation of protein structures. By adding long-range edges, the model performance is improved to 0.760, where the extra modeling of long-range interactions (\emph{e.g.}, hydrophobic interactions) contributes to this improvement. By using all three ranges of edges, the full model of {\model} achieves the best $\mathrm{F}_{\mathrm{max}}$ score 0.768, which demonstrates the necessity of capturing short-, medium- and long-range interactions for protein structure modeling. 

%%%%%%%%%%%%%%%%%%%%%%%%%%%%%%%%%%%%%%%%%%%%%%%%%%%%%%%%%%%%

\subsection{Effect of GRMP for Protein Structure Modeling} \label{supp:sec:ablation:layer}

%%%%%%%%%%%%%%%%%%%%%%%%%%%%%%%%%%%%%%%%%%%%%%%%%%%%%%%%%%%%

%%% layer ablation %%%%%%%%%%%%%%%
\begin{wraptable}{r}{0.41\textwidth}
\begin{spacing}{1.035}
\vspace{-8.3mm}
\caption{\small Ablation study of multi-relational modeling layer on EC with {\model}.}
\label{supp:tab:ablation:layer}
% \vspace{0.4mm}
\begin{adjustbox}{max width=1.0\linewidth}
\begin{tabular}{lc|c|c}
\toprule
\multirow{2}{*}{\bf{Layer}} & \bf{Hidden} & \bf{Throughput} & \multirow{2}{*}{$\mathbf{F}_\mathbf{max}$} \\
& \bf{Dimension} & \bf{(proteins/s)} & \\
\midrule
RGConv & 422 & 34.4 & 0.752 \\
\rowcolor{Gray}
GRMP & 512 & 34.6 & \bf{0.768} \\
RGConv & 512 & 31.2 & 0.767 \\
\rowcolor{Gray}
GRMP & 592 & 31.5 & \bf{0.780} \\
\bottomrule
\end{tabular}
\end{adjustbox}
\end{spacing}
\vspace{-5mm}
\end{wraptable} 

%%%%%%%%%%%%%%%%%%%%%%%%%%%%%%%%%%%%%%%%%%%%%%%%%%%%%%%%%%%%

In Tab.~\ref{supp:tab:ablation:layer}, we compare between RGConv and GRMP under the comparable throughput (\emph{i.e.}, the number of proteins that the model can process in one second). All experiments are performed on EC with {\model}. (1) We first set the hidden dimension of GRMP as 512. Under the comparable throughput, RGConv can only have the dimension of 422, and its $\mathrm{F}_{\mathrm{max}}$ score 0.752 is lower than GRMP's 0.768. (2) We then increase RGConv's hidden dimension to 512. At this time, RGConv achieves the $\mathrm{F}_{\mathrm{max}}$ score 0.767 which is comparable to GRMP's performance under the same dimension, while its throughput is decreased by 3.2. Under the comparable throughput, GRMP can have the hidden dimension of 592, which leads to a higher $\mathrm{F}_{\mathrm{max}}$ score 0.780. These results demonstrate that GRMP owns a better efficiency-performance trade-off than RGConv on protein structure modeling. 

%%%%%%%%%%%%%%%%%%%%%%%%%%%%%%%%%%%%%%%%%%%%%%%%%%%%%%%%%%%%
%%%%%%%%%%%%%%%%%%%%%%%%%%%%%%%%%%%%%%%%%%%%%%%%%%%%%%%%%%%%

%%% sensitivity of image neighbor size %%%%%%%%%%%%%%%
\begin{wraptable}{r}{0.56\textwidth}
\begin{spacing}{1.0}
\vspace{-17.1mm}
\caption{\small Sensitivity analysis of semantic neighbor size on ImageNet-1K with {\model}-T.}
\label{supp:tab:sensitivity:img_neighbor}
\vspace{0.4mm}
\begin{adjustbox}{max width=1.0\linewidth}
\begin{tabular}{c|cccccccc}
\toprule
\bf{\#Neighbors} & 3 & 6 & 9 & 12 & 15 & 18 & 21 & 24 \\
\midrule
\bf{Top-1 Acc (\%)} & 82.23 & 82.22 & 82.16 & \cellcolor{lightgray} 82.26 & 82.20 & \textbf{82.34} & 82.28 & \textbf{82.34} \\
\bottomrule
\end{tabular}
\end{adjustbox}
\end{spacing}
\vspace{-5mm}
\end{wraptable} 

%%%%%%%%%%%%%%%%%%%%%%%%%%%%%%%%%%%%%%%%%%%%%%%%%%%%%%%%%%%%

\section{Sensitivity Analysis} \label{supp:sec:sensitivity}

\textbf{Image modeling sensitivity to semantic neighbor size.} In Tab.~\ref{supp:tab:sensitivity:img_neighbor}, we report the performance of {\model}-T on ImageNet-1K classification under different semantic neighbor sizes for medium-range edge construction. Though some marginal improvements are observed by using a larger neighborhood size (\emph{i.e.}, more than or equal to 18 neighbors), the image modeling performance on this task is in general insensitive to the semantic neighbor size. By default, {\model}-T uses 12 semantic neighbors (denoted by the gray cell in Tab.~\ref{supp:tab:sensitivity:img_neighbor}), which achieves comparable performance with the configurations using more semantic neighbors. 

%%%%%%%%%%%%%%%%%%%%%%%%%%%%%%%%%%%%%%%%%%%%%%%%%%%%%%%%%%%%

%%%%%%%%%%%%%%%%%%%%%%%%%%%%%%%%%%%%%%%%%%%%%%%%%%%%%%%%%%%%

\end{document}